\theoremstyle{plain}
\newtheorem{theorem}{Theorem}[section]
\newtheorem{lemma}[theorem]{Lemma}
\theoremstyle{definition}
\newtheorem{definition}[theorem]{Definition}
\theoremstyle{remark}
\newcommand{\argmax}{\mathop{\arg\max}}
\DeclareMathOperator{\var}{Var}
\DeclareMathOperator{\poly}{poly}
\newcommand{\abs}[1]{\left\lvert#1\right\rvert}
\newcommand{\set}[1]{\left\{#1\right\}}
\icmltitlerunning{Optimal Clustering with Noisy Queries via Multi-Armed Bandit}
\begin{document}

\twocolumn[
\icmltitle{Optimal Clustering with Noisy Queries via Multi-Armed Bandit}



\icmlsetsymbol{equal}{*}

\begin{icmlauthorlist}
\icmlauthor{Jinghui Xia}{fudan}
\icmlauthor{Zengfeng Huang}{fudan,lab}
\end{icmlauthorlist}

\icmlaffiliation{fudan}{School of Data Science, Fudan University, Shanghai, China}
\icmlaffiliation{lab}{Shanghai Key Lab of Intelligent Information Processing}

\icmlcorrespondingauthor{Zengfeng Huang}{huangzf@fudan.edu.cn}

\icmlkeywords{Machine Learning, ICML}

\vskip 0.3in
]



\printAffiliationsAndNotice{}  

\begin{abstract}
 Motivated by many applications, we study clustering with a faulty oracle. In this problem, there are $n$ items belonging to $k$ unknown clusters, and the algorithm is allowed to ask the oracle whether two items belong to the same cluster or not. However, the answer from the oracle is correct only with probability $\frac{1}{2}+\frac{\delta}{2}$. The goal is to recover the hidden clusters with minimum number of noisy queries. Previous works have shown that the problem can be solved with $O(\frac{nk\log n}{\delta^2} + \poly(k,\frac{1}{\delta}, \log n))$ queries, while $\Omega(\frac{nk}{\delta^2})$ queries is known to be necessary. So, for any values of $k$ and $\delta$, there is still a non-trivial gap between upper and lower bounds. In this work, we obtain the first matching upper and lower bounds for a wide range of parameters. In particular, a new polynomial time algorithm with $O(\frac{n(k+\log n)}{\delta^2} + \poly(k,\frac{1}{\delta}, \log n))$ queries is proposed. Moreover, we prove a new lower bound of $\Omega(\frac{n\log n}{\delta^2})$, which, combined with the existing $\Omega(\frac{nk}{\delta^2})$ bound, matches our upper bound up to an additive $\poly(k,\frac{1}{\delta},\log n)$ term. To obtain the new results, our main ingredient is an interesting connection between our problem and multi-armed bandit, which might provide useful insights for other similar problems.
\end{abstract}

\section{Introduction}
In this paper, we study clustering with noisy queries, a.k.a. clustering with a faulty oracle.  In this problem, there are $n$ vertices (or items) belonging to $k$ unknown clusters, and the algorithm is allowed to ask the oracle whether two vertices belong to the same cluster or not. However, the answer from the oracle is correct only with probability $\frac{1}{2}+\frac{\delta}{2}$. The goal is to recover the hidden clusters with minimum number of noisy queries. This elegant theoretical model is first proposed in \cite{mazumdar17a}, which is motivated by many application scenarios, e.g., \emph{crowdsourced entity resolution} (CER). Entity resolution aims to identify all records in a database that refer to the same underlying entity, which is a fundamental data mining task \cite{fellegi1969theory,elmagarmid2006duplicate,getoor2012entity}. Based on crowdsourcing platforms like Amazon Mechanical Turk, leveraging human knowledge to improve the accuracy of automated ER techniques is a promising direction \cite{karger2011iterative, wang2012crowder,dalvi2013aggregating,vesdapunt2014crowdsourcing, mazumdar2017theoretical}. The answers from crowd
workers are often very noisy \cite{prelec2017solution} and the goal of CER is to recover entities with minimum number of queries. Thus, clustering with noisy queries well captures CER. See \cite{mazumdar17a} for discussions on more real applications, e.g., signed edge prediction and correlation clustering.

On the theory side, the model is intimately connected with the stochastic block model (SBM). SBM is a popular random graph model for studying clustering and community detection algorithms, and plays a key role in studying statistical and computational tradeoffs for many statistical problems \cite{mcsherry2001spectral, abbe2017community}. In SBM, $n$ vertices belong to $k$ hidden clusters; each inter-cluster edge exists with probability $q$ and each intra-cluster edge exists with probability $p$, where $0\le q < p \le 1$. Given a graph sampled from the above 
model, the goal is to recover all hidden clusters. When $q=\frac{1}{2}-\frac{\delta}{2}$ and $p=\frac{1}{2}+\frac{\delta}{2}$, SBM is equivalent to clustering with noisy queries, except that SBM always observes all $O(n^2)$ queries while our goal is to use $o(n^2)$ queries. 

\paragraph{Problem definition.} Given a set $V$ of $n$ vertices (or items), there is an unknown partition $\set{V_1,...,V_k}$ of $V$ and each vertex $i$ belongs to a unique cluster $V_{c(i)}$.  Define a function $\tau:V\times V\rightarrow{\pm 1}$ such that $\tau(u,v)=1$ if $u,v$ belong to the same cluster and $\tau(u,v)=-1$ if $u,v$ belong to different clusters. To know whether a pair $(u, v)$ is in the same cluster, the algorithm can make queries to an oracle, but the oracle will only return the correct answer with probability $\frac{1}{2}+\frac{\delta}{2}$ for some $\delta\in(0,1)$. Equivalently, the algorithm has access to a noisy version of $\tau$, denoted as $\tilde{\tau}$:
\begin{align*}
	\tilde{\tau}(u,v) = \sigma_{u,v}\cdot \tau(u,v),
\end{align*}
where $\sigma_{u,v}$ is a $\set{\pm 1}$ random variable attaining $+1$ with probability $\frac{1}{2}+\frac{\delta}{2}$.
We assume that $\sigma_{u,v}$ is independent across different pairs and the oracle always returns the same answer for queries to the same pair. 
The goal is to recover $V_i$, $i=1,...,k$ with high probability by making minimum number of queries to the oracle.

\subsection{Previous Results}
\citeauthor{mazumdar17a} \citeyearpar{mazumdar17a} gave an super-polynomial time algorithm with $O(\frac{nk\log n}{\delta^2})$ queries, which recovers all clusters of size $\Omega(\frac{\log n}{\delta^2})$. Then they presented a polynomial time algorithm using $O(\frac{nk\log n}{\delta^2} + \min\{\frac{nk^2\log n}{\delta^4},\frac{k^5\log^2 n }{\delta^8})\}$ queries, which recovers all clusters of size $\Omega(\frac{k\log n}{\delta^4})$. In the same paper, an information-theoretic lower bound of $\Omega(\frac{nk}{\delta^2})$ queries is proved. Later, \citeauthor{green2020clustering} \citeyearpar{green2020clustering} focused on the two-cluster case, and  proposed an algorithm with query complexity $O(\frac{n\log n}{\delta^2} + \frac{\log^2 n }{\delta^6})$. Very recently, \citeauthor{peng21a} \citeyearpar{peng21a} focused on improving the dependency on $\delta$, and gave a polynomial time algorithm with query complexity $O(\frac{nk\log n}{\delta^2} + \frac{k^{10}\log^2 n }{\delta^4})$, which recovers all clusters of size $\Omega(\frac{k^4\log n}{\delta^2})$. Compared to \cite{mazumdar17a,green2020clustering}, the dependence on $\delta$ in the second term is improved to $1/\delta^4$. Note, the lower bound suggests that the setting is meaningful only if $\frac{1}{\delta}\le n^{1/2
}$, since otherwise $\Omega(n^2)$ queries are required. Thus, $\frac{1}{\delta^4}\le \frac{n}{\delta^2}$, and if we ignore $k$ and $\log n$, the dependence on $\delta$ is optimal. 

However, for any $k$ and $\delta$, there is still a non-trivial gap between upper and lower bounds. In particular, whether the first term $O(\frac{nk\log n}{\delta^2})$ can be improved, even without running time constraint, remains an unanswered question. As this term dominates the complexity for small $k,1/\delta$ (which is likely the case), completely settling the question is critical to understanding the nature of the problem. In this paper, we fully resolve this question.

\subsection{Our Contribution}
We propose a new polynomial time algorithm with $O(\frac{n(k+\log n)}{\delta^2} + \frac{k^{8}\log^3 n}{\delta^4})$ queries, which is the first improvement on the first term since \cite{mazumdar17a}. The dependence on $k$ in the second term is also slightly improved.

\begin{theorem} \label{thm.nc}
    There exists a polynomial time algorithm that recovers all the clusters of size $\Omega(\frac{k^4\log n}{\delta^2})$ with success probability $1-o_n(1)$. The total number of queries to the faulty oracle is $O(\frac{n(k+\log n)}{\delta^2}+\frac{k^{8}\log^3 n}{\delta^4})$.
\end{theorem}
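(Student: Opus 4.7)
The plan is to split the algorithm into a \emph{skeleton} phase and a \emph{classification} phase, and to obtain the saving in the first term by reducing classification to best-arm identification in a multi-armed bandit (MAB) in which exactly one arm has mean $+\delta$ and the remaining $k-1$ arms have mean $-\delta$.

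\textbf{Skeleton phase.} Adapting the recursive sub-sampling framework underlying the second term of \cite{peng21a}, I first produce, with probability $1-o_n(1)$ and at a total cost of $\tilde O(k^{8}\log^{3}n/\delta^{4})$ queries, pairwise disjoint representative sets $R_1,\dots,R_k$ with $R_i\subseteq V_i$ and $|R_i|=\Theta(\log^{2}n/\delta^{2})$ for every cluster $V_i$ of size $\Omega(k^{4}\log n/\delta^{2})$. This contributes exactly the second term of the stated bound and yields $k$ correctly labeled pools against which the remaining vertices can be tested.

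\textbf{Classification via MAB.} For each unplaced vertex $v$, I view the $k$ clusters as arms; pulling arm $i$ selects a fresh representative $r\in R_i$ and returns $\tilde\tau(v,r)\in\{\pm 1\}$. Instead of the naive $O(k\log n/\delta^{2})$ pulls per vertex, I use an \emph{exploration--verification} loop. Exploration draws $O(1/\delta^{2})$ pulls from each arm (total $O(k/\delta^{2})$) and elects $i^{\star}=\argmax_{i}\hat\mu_i$; verification then draws $O(\log n/\delta^{2})$ further pulls from $i^{\star}$ and accepts it iff the fresh empirical mean exceeds $\delta/2$. If verification rejects, the loop restarts with fresh representatives, up to a cap of $O(\log n)$ rounds. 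A Chernoff bound shows that exploration elects the correct arm with probability at least $1/2$ per iteration and that verification misclassifies with probability $O(1/n^{3})$; hence a single vertex is misclassified with probability $O(1/n^{2})$, and a union bound over $v\in V$ yields overall correctness $1-o_n(1)$. For the query total, each iteration uses $O((k+\log n)/\delta^{2})$ queries, and the number of iterations $T_v$ for a given vertex is stochastically dominated by a geometric variable of mean $O(1)$; a Bernstein-type concentration on $\sum_v T_v$ over the $n$ independent vertices gives $\sum_v T_v=O(n)$ with probability $1-o_n(1)$, so the total classification cost is $O(n(k+\log n)/\delta^{2})$ and, combined with the skeleton, the theorem follows.

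\textbf{Main obstacle.} The delicate point is that the MAB reduction is non-standard because the oracle's noise is \emph{frozen}: a pair $(v,r)$ cannot give an independent second sample, and each arm offers only $|R_i|$ non-renewable pulls. I must verify that across its at most $O(\log n)$ iterations the exploration--verification loop never demands more than $O(\log^{2}n/\delta^{2})$ pulls from any single arm (matching $|R_i|$), and that every pull uses a previously untouched representative so that the observations remain independent. A related subtlety is that vertices in small clusters (size $o(k^{4}\log n/\delta^{2})$) have no correct arm in $R_1,\dots,R_k$; the $\delta/2$ acceptance threshold rejects them cleanly, after which they are diverted to a small residual routine whose cost is absorbed in the second term. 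Aligning the bandit schedule with the per-arm budget $|R_i|$ while keeping all of these pieces within the stated complexity is the step I expect to be hardest.
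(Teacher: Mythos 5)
Your plan separates the algorithm into a self-contained ``skeleton'' phase that produces, \emph{before} any classification, disjoint representative pools $R_1,\dots,R_k$ with $R_i\subseteq V_i$, $|R_i|=\Theta(\log^2 n/\delta^2)$, for \emph{every} cluster of size $\Omega(k^4\log n/\delta^2)$, all at cost $\tilde O(k^8\log^3 n/\delta^4)$. This cannot be done. To place $\Theta(\log^2 n/\delta^2)$ representatives of a cluster of size $s_i=\Theta(k^4\log n/\delta^2)$ into a uniformly random sample $T$ you need $|T| = \Omega\bigl(n\log^2 n/(\delta^2 s_i)\bigr) = \Omega(n\log n / k^4)$; for $k=O(1)$ that is $\Omega(n\log n)$, and the $|T|^2$ pairwise queries used to cluster $T$ already blow past the entire budget. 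There is no way to ``see'' the smallest admissible clusters in a $\poly(k,\log n,1/\delta)$-sized sample while the large clusters are still present. This is exactly why the paper, following Peng \& Zhang, \emph{interleaves} the two phases: in each round it samples a small $T^r\subset U$, prunes $T^r$ by degree so that only clusters that are large \emph{relative to $|U|$} survive, recovers sub-clusters for those via $\mathsf{BalSBM}$, classifies/removes a constant fraction of $U$ via BAI plus verification, and recurses on what remains. Clusters that are tiny initially become dominant after the big ones are peeled off, and after $O(\log n)$ rounds all clusters of size $\Omega(k^4\log n/\delta^2)$ are exposed. A separate merging step (Section~\ref{sec.merge}) then stitches together sub-clusters of the same $V_i$ found in different rounds; your proposal has no analogue of this step, and it is necessary precisely because representatives are discovered incrementally, not all at once.

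\textbf{What is right.} Your classification phase --- a constant-success exploration round of $O(k/\delta^2)$ pulls, a verification round of $O(\log n/\delta^2)$ pulls against the elected arm, retry on rejection, and geometric concentration of $\sum_v T_v$ --- is the same idea the paper uses (the paper invokes median elimination with $\alpha=1/4$ or $1/8$, then $\mathsf{ClusterVerify}$; your na\"{i}ve $\argmax$ over $O(1/\delta^2)$ pulls per arm is an equally valid choice at this error level). Your worry about frozen noise and per-arm budgets is real and correctly diagnosed, and the resolution (fresh representatives per pull, per-vertex budget $O(\log^2 n/\delta^2)\le |R_i|$, cross-vertex reuse permitted because the noise is per-pair) is sound. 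One small mismatch with the paper: the paper's sub-clusters are only $(0.1,V_{g(i)})$-biased rather than strict subsets, so it runs BAI with a reduced gap $\delta/5$ (Lemma~\ref{lem:true-cluster-id-biased}); your $R_i\subseteq V_i$ assumption would sidestep this, but only because the skeleton claim is too strong. To repair the proposal you would need to replace the standalone skeleton phase with the iterative peel-and-recurse structure plus the final merge.
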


\vspace{-0.1cm}
Since $\Omega(\frac{nk}{\delta^2})$ is a lower bound, our algorithm is optimal when $k=\Omega(\log n)$. But for $k=o(\log n)$, there is still a non-negligible gap, and it is previously unknown whether $\frac{n\log n}{\delta^2}$ is necessary for constant $k$. To this end, we prove a new information-theoretic lower bound for any $k\ge 2$, which matches our upper bound for all $k$.

\begin{theorem}
\label{thm.lb}
    For $k\ge 2$, any (randomized) algorithm must make $\Omega(\frac{n\log n}{\delta^2})$ expected number of queries to recover the correct clusters with probability at least $\frac{7}{8}$.
\end{theorem}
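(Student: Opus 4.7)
The plan is to apply Yao's minimax principle: fix the prior in which each vertex is placed independently and uniformly in one of two clusters (for $k>2$, I would seed the remaining $k-2$ clusters by a fixed handful of singletons, an $O(k)$ additive cost) and lower bound the expected queries of any deterministic algorithm against this prior via a ``genie argument''.

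For each vertex $v$, I would define $\mathcal F_v$ to be the event that the \emph{genie-aided} maximum likelihood decoder for $c(v)$---which knows $c(u)$ for every $u\ne v$ and sees all observed answers---errs. Given the true labels, the evidence about $c(v)$ in a query $(v,u)\in E$ reduces to the single bit $\tilde\tau(v,u)\,c(u)=c(v)\,\sigma_{v,u}$, so $\mathcal F_v$ is exactly the event that the majority of the $Q_v:=\deg_E(v)$ noise bits $\sigma_{v,u}$ equals $-1$. A short likelihood-ratio calculation between the true labeling $c$ and its single-flip neighbor $c^{(v)}$ yields
\[
\frac{\Pr[c^{(v)}\mid \tilde\tau]}{\Pr[c\mid \tilde\tau]}=\Bigl(\tfrac{1-\delta}{1+\delta}\Bigr)^{m_+-m_-},
\]
where $m_\pm$ counts the signs of the $\sigma_{v,u}$'s, so whenever $\mathcal F_v$ occurs, $c^{(v)}$ has strictly higher posterior than $c$ and every algorithm (being no better than Bayes-optimal) must output the wrong partition. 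Hence $\Pr[\text{failure}]\ge \Pr[\bigcup_v \mathcal F_v]$.

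Assume for contradiction that $\mathbb E[T]\le \kappa n\log n/\delta^2$ for a small absolute constant $\kappa$. Since $\sum_v Q_v=2T$, Markov's inequality yields (with constant probability over the prior) a set $S\subseteq V$ of size $|S|\ge n/2$ with $Q_v= O(\kappa\log n/\delta^2)$ for all $v\in S$. The induced subgraph of the query graph on $S$ has maximum degree $O(\kappa\log n/\delta^2)$, so greedy selection extracts an independent set $I\subseteq S$ with $|I|=\Omega(n\delta^2/\log n)$. For any $u,v\in I$ we have $(u,v)\notin E$, which makes the noise variables controlling $\mathcal F_u$ and $\mathcal F_v$ disjoint, so $\{\mathcal F_v\}_{v\in I}$ are mutually independent (in the non-adaptive setting). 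Using binomial anti-concentration, $\Pr[\mathcal F_v]\ge \exp(-O(Q_v\delta^2))\ge n^{-O(\kappa)}$; choosing $\kappa$ small enough that $\Pr[\mathcal F_v]\ge n^{-1/4}$ gives $\sum_{v\in I}\Pr[\mathcal F_v]=\omega(1)$, and therefore $\Pr[\bigcup_{v\in I}\mathcal F_v]\ge 1-\exp(-\sum_{v\in I}\Pr[\mathcal F_v])\to 1$, contradicting the $7/8$ success requirement.

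The main technical obstacle is the adaptive case: when queries depend on prior noisy answers, both $E$ and the $Q_v$'s are random and can be coupled with the noises, so $\mathcal F_v$ is no longer a function of $\sigma$ alone and the mutual independence across $I$ breaks. I would handle this via a careful revelation order---sampling $c$ first, then exposing the algorithm's random query graph jointly with its noise variables via a Doob-style decomposition and applying the independent-set argument conditional on the exposed history---together with a tail bound ensuring $S$ remains of linear size with constant probability. A more robust alternative is a second-moment argument that directly bounds $\mathrm{Var}(\sum_{v\in I}\mathbf 1_{\mathcal F_v})$ through shared noise variables, sidestepping any need to assert full independence.
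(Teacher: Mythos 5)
Your proposal follows a classical ``direct'' route for SBM-style lower bounds (fix a uniform prior on two clusters, apply Yao, and argue via a genie-aided MAP decoder and anti-concentration), while the paper takes a genuinely different route: it defines a variant of best-arm identification in which the algorithm is allowed to output FAIL with probability at most $1/8$ (``BAIF''), proves a sample-complexity lower bound $\Omega(\delta^{-2}\log(1/\alpha))$ for BAIF via a Mannor--Tsitsiklis-style likelihood-ratio/change-of-measure argument, and then exhibits a per-vertex reduction: for each vertex $s$ it builds a BAIF algorithm $\mathcal A'_s$ by simulating the clustering algorithm, replacing queries that touch $s$ by arm pulls and returning FAIL when any earlier vertex is misclassified. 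The aggregation over vertices is done via the chain rule and a notion of ``good'' index $s$ (one where $\Pr[\mathcal E_s\mid\mathcal E_1,\dots,\mathcal E_{s-1}]\ge 1-1/n$); at least $3n/4$ indices are good, each good index forces $\Omega(\delta^{-2}\log n)$ queries touching it, and summing gives the bound. This buys two things your route does not have for free: (i) adaptivity is handled automatically, since the BAIF lower bound, like any BAI lower bound, is proved against adaptive strategies; (ii) the aggregation is exact, so the bound holds for the entire meaningful range of $\delta$.

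In your proposal there are two genuine gaps. First, and you acknowledge it, the core ingredient $\Pr[\bigcup_{v\in I}\mathcal F_v]\ge 1-\prod_{v\in I}(1-\Pr[\mathcal F_v])$ uses mutual independence of the $\mathcal F_v$'s across the independent set $I$, which only holds when the query graph $E$ is fixed in advance. For an adaptive algorithm, $E$ and the degrees $Q_v$ are themselves random and correlated with the noise variables $\sigma$, so $\mathcal F_v$ is not a function of a fixed disjoint set of $\sigma$-coordinates; the ``Doob-style decomposition'' and ``second-moment'' sketches you give do not yet resolve this, and this is precisely the obstacle the BAIF-reduction sidesteps. Second, a quantitative issue that you do not flag: with $\mathbb E[T]\le\kappa n\log n/\delta^2$ your extraction gives $|I|=\Omega(n\delta^2/(\kappa\log n))$ and $\Pr[\mathcal F_v]\ge n^{-c\kappa}$, so $\sum_{v\in I}\Pr[\mathcal F_v]=\Omega\!\bigl(n^{1-c\kappa}\delta^2/(\kappa\log n)\bigr)$. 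For any \emph{fixed} $\kappa>0$ this fails to be bounded away from zero when $\delta^2\lesssim\log n/n^{1-c\kappa}$, while the theorem is supposed to hold down to $\delta^2=\Theta(\log n/n)$ (below which the bound exceeds $n^2$ and is vacuous); shrinking $\kappa$ only shrinks the constant in the $\Omega(\cdot)$ and still leaves a polynomial window of $\delta$ uncovered. The paper's reduction does not lose anything at this borderline because the per-vertex BAIF bound is already $\Omega(\delta^{-2}\log n)$ with a universal constant and is simply multiplied by $\Omega(n)$.
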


To obtain those improved results, we introduce several new techniques. For the upper bound, we follow the framework of \cite{peng21a}, but make several critical changes. The most important new ingredient is to utilize multi-armed bandit algorithms to identify true clusters of vertices. Our lower bound proof is also quite different from \cite{mazumdar17a}. Specifically, we introduce a new variant of best arm identification, and prove a tight lower bound on the sample complexity; then we give a novel reduction from this problem to clustering with noisy queries.

\subsection{Other Related Work}
Clustering is a central problem in data mining and machine learning; numerous algorithms have been proposed, e.g., \cite{mclachlan1988mixture, karypis1998fast, ng2002spectral, brandes2003experiments, arthur2006k}. However, clustering problems are often computationally hard. With the emergence of crowdsourcing platforms, the humans-in-the-loop approaches become a popular research direction. One line closely related to ours is the semi-supervised active clustering framework \cite{ashtiani2016clustering}. Their goal is to incorporate same-cluster queries in specific clustering problems to alleviate the computational hardness, e.g., k-means \cite{chien2018query,bressan2020exact,li2021learning}, correlation clustering \cite{ailon2018approximate,saha2019correlation}. 

\section{Preliminaries and Tools}
We first introduce two important definitions.
\begin{definition} \label{def:balance}
    Let $b\in[0,1]$ and $V$ be a vertex set. We call a partition $V_1,...,V_k$ of $V$ $b$-balanced, if $\abs{V_i}\ge \frac{bn}{k},\forall i\in[k]$. An input instance $V$ is $b$-balanced if its hidden clusters form a $b$-balanced partition.
\end{definition}

\begin{definition}\label{def:bias}
	Let $\eta\in[0,\frac{1}{2}]$. Let $C=V_i$ be a true cluster of $V$ for some $i\in[k]$. A set of vertices $B\subseteq V$ is called $(\eta$, C)-biased if at least $1/2+\eta$ fraction of the vertices in $B$ belong to $C$, i.e. $|B\cap C|\ge(1/2+\eta)\cdot|B|$. When $\eta =1/2$, i.e., $B$ is a subset of some cluster, $B$ is called a sub-cluster.
\end{definition}

\paragraph{Balanced clustering in stochastic block model.}
Fix a $k$-partition $V_1,...,V_k$ of $V$ and a parameter $\delta\in[0,1)$, the distribution $\mathcal{D}(V_1,...,V_k,\delta)$ samples a random graph as follows: for any two vertices $u,v\in V$, if they are in the same cluster, then there is an edge between them with probability $\frac{1}{2}+\frac{\delta}{2}$, otherwise there is an edge between them with probability $\frac{1}{2}-\frac{\delta}{2}$. This distribution is a special case of the well-studied stochastic block model (SBM). When $\delta$ is not too small, there exist polynomial time algorithms that recover $V_1,...,V_k$ from the random graph $G\sim\mathcal{D}(V_1,...,V_k,\delta)$ with high probability. Here, we will need the following result from \cite{peng21a}, which instantiates a general theorem from \cite{vu2018a}.

\begin{lemma}[\cite{vu2018a}]
	\label{thm.vu}
	Let $\delta\in(0,1]$, $n=|V|$ and $G\sim\mathcal{D}(V_1,...,V_k,\delta)$. Suppose the partition $V_1,...,V_k$ is $b$-balanced for some $b\in(0,1]$. Then there exists an algorithm, denoted by $\mathsf{BalSBM}(G,k,\delta,b)$, that recovers all the clusters $V_1,...,V_k$ of $G$ in polynomial time with probability at least $1-n^{-8}$, if 
	$
	n\ge \frac{c_0k^2\log n}{b^2\delta^2}
	$
	for some large enough constant $c_0$.
\end{lemma}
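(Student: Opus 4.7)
The plan is to combine a spectral initialization with a local cleanup pass, which is the standard template for exact recovery in the stochastic block model. Define the centered adjacency matrix $\tilde A := A - \tfrac{1}{2}(J - I)$, where $J$ is the all-ones matrix. Then $E[\tilde A_{uv}] = +\delta/2$ whenever $u,v$ share a cluster and $-\delta/2$ otherwise, so $E[\tilde A]$ is a rank-$k$ matrix whose $k$ nonzero eigenvalues have magnitude on the order of $\delta \cdot \min_i |V_i| \ge \delta bn/k$ under the $b$-balanced hypothesis.

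First I would invoke a Bernstein-type matrix concentration bound on the independent, $[-1/2, 1/2]$-bounded perturbation $\tilde A - E[\tilde A]$ to obtain $\|\tilde A - E[\tilde A]\| = O(\sqrt{n})$ with probability $1 - n^{-\omega(1)}$. Under the hypothesis $n \ge c_0 k^2 \log n / (b^2\delta^2)$, the spectral gap $\delta b n/k$ of $E[\tilde A]$ dominates this noise by a $\sqrt{\log n}$ factor, so Davis--Kahan yields closeness of the top-$k$ invariant subspaces in Frobenius norm. Projecting the rows of $\tilde A$ onto its top-$k$ subspace produces $n$ embedded points clustered tightly around $k$ centers, one per $V_i$; running $k$-means or a simple pairwise-distance thresholding on these row-embeddings returns a preliminary partition $\hat V_1, \ldots, \hat V_k$ that agrees with the hidden one on all but a vanishing fraction of each $V_i$.

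Next I would run a cleanup pass: for each vertex $v$, reassign $v$ to the tentative cluster $\hat V_i$ that maximizes the fraction of observed edges from $v$ into $\hat V_i$. Since the preliminary partition misclassifies only an $o(b n/k)$-fraction of each cluster, the expected edge fraction into $v$'s true cluster exceeds that into any other cluster by $\Omega(\delta)$, and each cluster contributes $\Omega(bn/k)$ approximately-independent samples. A Chernoff bound at scale $\delta^2 \cdot bn/k = \Omega(\log n)$, together with a union bound over the $n$ vertices, shows this reassignment is simultaneously correct for every vertex with probability $1 - n^{-8}$, giving exact recovery.

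The main obstacle is sharpness: obtaining the quadratic $k^2$ dependence rather than a worse polynomial requires exploiting that the smallest rather than the average cluster size controls the spectral gap and that the row-embedding problem is only $k$-dimensional; obtaining the $n^{-8}$ failure probability (rather than merely $o(1)$) requires that the spectral step leave few enough errors that the Chernoff-plus-union argument in the cleanup has slack. Both of these are exactly what the general SBM recovery theorem of \cite{vu2018a} is designed to deliver, so the final step of the plan is simply to verify that $p = \tfrac{1}{2}+\tfrac{\delta}{2}$, $q = \tfrac{1}{2}-\tfrac{\delta}{2}$, and $\min_i |V_i| \ge bn/k$ fit its parameter regime, and then quote it as a black box.
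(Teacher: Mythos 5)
Your plan ends where the paper begins and ends: Lemma~\ref{thm.vu} is stated in the paper as a direct instantiation of the general SBM recovery theorem of \cite{vu2018a} (via \cite{peng21a}), and no proof is given beyond the citation. Your final step — check that $p=\tfrac12+\tfrac{\delta}{2}$, $q=\tfrac12-\tfrac{\delta}{2}$, and $\min_i|V_i|\ge bn/k$ fall in Vu's parameter regime, then quote the theorem as a black box — is exactly what the paper does. The preceding spectral-initialization-plus-cleanup sketch (centering $A$, controlling $\norm{\tilde A-\mathbb{E}[\tilde A]}=O(\sqrt n)$ by matrix Bernstein, Davis--Kahan on the top-$k$ subspace, then majority-vote reassignment with a Chernoff-plus-union argument at scale $\delta^2 bn/k=\Omega(\log n)$) is a faithful outline of what Vu's theorem itself establishes, so it is consistent and correct as motivation, but it is strictly more than the paper supplies or requires. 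One small imprecision worth flagging if you were to flesh this out: $\mathbb{E}[\tilde A]$ as you define it is rank $k{+}1$, not $k$ (the $-\tfrac{\delta}{2}J$ part contributes an extra rank-one piece), though this does not affect the argument since the relevant invariant subspace is still $k$-dimensional once you account for the all-ones direction.
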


\paragraph{Best arm identification.} We considered a special case of best arm identification (BAI). In this problem, there are $k$ different arms $a_1,...,a_k$. When the algorithm pulls $a_i$, it receives a reward $r\in\set{0,1}$ from $a_i$. Assume the rewards from each $a_i$ are i.i.d. $\mathsf{Bernoulli}(\mu_i)$, where $\mu_i\in[0,1],\forall i$. The goal is to identify $i^* =\argmax_i \mu_i $ using minimum number of pulls. We often use sample complexity to denote the total number of pulls conducted by an algorithm. In our application of BAI, the input instance always satisfies the following conditions: $i^*\in[k]$ is unique,  $\mu_{i^*} \ge \frac{1}{2}+\frac{\delta}{2}$, and $\mu_i \le \frac{1}{2}-\frac{\delta}{2}$ for all $i\neq i^*$. BAI is well-studied and the following results will be used, which is a special case of Theorem 10 from \cite{evendar06a}.

\begin{lemma}\label{lem:BAI}
	Given $\delta\in(0,1)$, $\alpha\in(0,1)$ and a $k$-BAI instance defined above, there exists an algorithm $\mathsf{ME}(k, \delta,\alpha)$ that identifies the best arm with probability at least $1-\alpha$, and its sample complexity is $O(\frac{k}{\delta^2}\log\frac{1}{\alpha})$. 
\end{lemma}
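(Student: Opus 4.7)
My plan is to invoke the Median Elimination (ME) algorithm of Even-Dar, Mannor, and Mansour, since the lemma is explicitly cited as a specialization of their Theorem 10; the remaining work is to match their $(\epsilon,\alpha)$-PAC guarantee to the structural assumption of a known gap $\delta$ between $\mu_{i^*}$ and the rest. The algorithm that will instantiate $\mathsf{ME}(k,\delta,\alpha)$ is exactly their median elimination procedure run with target accuracy $\epsilon=\delta/2$ and confidence $\alpha$.

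First, I would recall the ME procedure. Maintain a working set $S_1=[k]$ and parameters $\epsilon_1=\epsilon/4$, $\alpha_1=\alpha/2$. In round $\ell$, pull each arm in $S_\ell$ exactly $t_\ell=\Theta\bigl((1/\epsilon_\ell^2)\log(1/\alpha_\ell)\bigr)$ times, compute empirical means $\hat\mu_i$, let $m_\ell$ be the median of $\{\hat\mu_i : i\in S_\ell\}$, and set $S_{\ell+1}=\{i\in S_\ell : \hat\mu_i\ge m_\ell\}$; then update $\epsilon_{\ell+1}=(3/4)\epsilon_\ell$ and $\alpha_{\ell+1}=\alpha_\ell/2$. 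After $O(\log k)$ rounds only one arm survives, which is the output.

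Second, for correctness I would verify the inductive invariant that with probability at least $1-\sum_j\alpha_j\ge 1-\alpha$, the largest true mean among arms surviving round $\ell$ is at most $\sum_{j\le\ell}\epsilon_j$ below $\mu_{i^*}$. The single-round step uses Hoeffding's inequality to guarantee $|\hat\mu_i-\mu_i|\le\epsilon_\ell/2$ for all $i\in S_\ell$ with probability $1-\alpha_\ell$, after which a simple counting argument (the surviving arm must have $\hat\mu\ge m_\ell$, and the best arm's empirical mean is within $\epsilon_\ell/2$ of its true mean) shows that the best true mean can drop by at most $\epsilon_\ell$. The geometric sum of $\epsilon_\ell$'s is $\le\epsilon=\delta/2$, so the returned arm is $\delta/2$-optimal; since every suboptimal arm is $\delta$-suboptimal, the returned arm must be $i^*$.

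Third, for sample complexity I would sum $\sum_\ell |S_\ell| t_\ell$. Using $|S_\ell|\le k/2^{\ell-1}$, $\epsilon_\ell^{-2}=\Theta((4/3)^{2\ell}/\epsilon^2)$, and $\log(1/\alpha_\ell)=O(\ell+\log(1/\alpha))$, each round contributes $O\bigl(k(8/9)^\ell(\ell+\log(1/\alpha))/\epsilon^2\bigr)$, a convergent geometric series dominated by the first term. The total is $O((k/\epsilon^2)\log(1/\alpha))=O((k/\delta^2)\log(1/\alpha))$, the claimed bound. The only delicate point is arranging the constants so that the $(8/9)^\ell$ geometric factor beats the $\ell$ growing inside $\log(1/\alpha_\ell)$; this is precisely what the choices $\epsilon_{\ell+1}=(3/4)\epsilon_\ell$ and $|S_{\ell+1}|\le|S_\ell|/2$ are engineered for. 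In practice, since this is a direct citation, I would simply apply Theorem 10 of \cite{evendar06a} with $(\epsilon,\alpha')=(\delta/2,\alpha)$ and note that $\delta/2$-optimality combined with a gap of $\delta$ forces exact identification of $i^*$.
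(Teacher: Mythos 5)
Your proposal is correct and follows essentially the same route as the paper, which states this lemma without proof as a direct specialization of Theorem 10 of Even-Dar, Mannor, and Mansour; the key instantiation you identify---running Median Elimination with accuracy $\epsilon=\delta/2$ so that $\epsilon$-optimality together with the built-in gap $\mu_{i^*}-\mu_i\ge\delta$ forces exact identification of $i^*$---is precisely the intended reading of the citation.
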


\section{Algorithm for Nearly Balanced Instances}\label{sec:balancedCase}

In this section, we provide an algorithm for nearly balanced instances (see Definition~\ref{def:balance}), which will be used as a subroutine in the algorithm for general instances. For this easier case, the algorithm is simpler, and the intuition behind using BAI to improve the sample complexity is clearer.

\begin{theorem} \label{thm.bnc}
    Let $b\in(0,1]$ and $n\ge\frac{c_0k^2\log^2 n}{b^2\delta^2}$ for some constant $c_0$. Suppose $V$ is a $b$-balanced instance. There exists a polynomial time algorithm that recovers all the clusters with success probability $1-o_n(1)$. The total number of queries to the faulty oracle is $O(\frac{n(k+\log n)}{\delta^2}+\frac{k^{4}\log^2 n}{b^4\delta^4})$.
\end{theorem}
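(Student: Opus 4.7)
The plan is to split the per-vertex work into a cheap constant-error Best Arm Identification (cost $O(k/\delta^2)$) followed by a cheap majority-vote verification (cost $O(\log n/\delta^2)$); these two costs add rather than multiply, giving the $O(n(k+\log n)/\delta^2)$ term. A small preprocessing phase supplies the reference sub-clusters needed by the BAI step, and geometric recursion handles the BAI failures.

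\emph{Step 1 (reference sub-clusters).} I would first sample uniformly at random a subset $S\subseteq V$ of size $s=\Theta(k^2\log n/(b^2\delta^2))$. A Chernoff bound shows that $S$ is $(b/2)$-balanced with high probability. I then query every pair in $S$ (cost $O(s^2)=O(k^4\log^2 n/(b^4\delta^4))$) and feed the resulting noisy graph into $\mathsf{BalSBM}(\cdot,k,\delta,b/2)$ from Lemma~\ref{thm.vu}; the choice of $s$ meets the lemma's hypothesis, so with probability $1-s^{-8}$ I recover the true sub-clusters $C_1,\dots,C_k$ of $S$ (relabel so that $C_i\subseteq V_i$).

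\emph{Step 2 (BAI + verification loop).} Set $U_0=V\setminus S$. In round $t$, for each $v\in U_t$ view the sub-clusters $C_1,\dots,C_k$ as $k$ arms: pulling arm $i$ means drawing a fresh vertex $u\in C_i$ (without replacement) and querying $(v,u)$. Since $|C_i|\gg k/\delta^2$, sampling without replacement is effectively i.i.d.\ $\mathsf{Bernoulli}$, and the means satisfy $\mu_{c(v)}\ge 1/2+\delta/2$ and $\mu_j\le 1/2-\delta/2$ for $j\neq c(v)$---exactly the instance of Lemma~\ref{lem:BAI}. Running $\mathsf{ME}(k,\delta,1/4)$ yields a candidate $i(v)$ at cost $O(k/\delta^2)$. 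I then verify by drawing $m=\Theta(\log n/\delta^2)$ fresh vertices from $C_{i(v)}$, querying each against $v$, and taking a majority vote; a Chernoff bound with bias $\delta$ shows that, except with probability $n^{-c}$, the majority is positive iff $c(v)=i(v)$. Vertices that pass verification are assigned permanently (and merged into the host cluster for subsequent rounds); the rest form $U_{t+1}$.

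\emph{Step 3 (geometric recursion and accounting).} Since $\mathsf{ME}$ errs with probability at most $1/4$, a Chernoff bound gives $|U_{t+1}|\le |U_t|/2$ with probability $1-e^{-\Omega(|U_t|)}$; iterating, $|U_t|$ shrinks geometrically to $O(\log n)$ within $O(\log n)$ rounds, after which a single high-confidence $\mathsf{ME}$ call with $\alpha=n^{-2}$ finishes the remainder at negligible cost. Summing the geometric series gives $O(nk/\delta^2)$ total BAI work and $O(n\log n/\delta^2)$ total verification work. Adding Step~1 yields the claimed query budget, and a union bound over the $O(\log n)$ rounds and $n$ vertices gives success probability $1-o_n(1)$. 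The main obstacle is precisely the reduction to BAI: Lemma~\ref{lem:BAI} requires i.i.d.\ $\mathsf{Bernoulli}$ arms, which forces each reference sub-cluster $|C_i|$ to dwarf the per-arm pull count and this is what dictates the size of $S$ (and hence the second additive term in the bound). A secondary subtlety is that $C_i$ grows across rounds; one must check that the verification filter admits only vertices whose true label matches the host cluster, so that the BAI means in later rounds remain separated by $\delta$---both issues are handled by the parameter choices above together with routine Chernoff bounds.
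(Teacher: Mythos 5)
Your proposal is correct and matches the paper's approach: a $\mathsf{SubCluster}$/$\mathsf{BalSBM}$ preprocessing phase to obtain reference sub-clusters, then per-vertex constant-error BAI followed by a $\Theta(\log n/\delta^2)$ majority-vote verification, iterated with geometric decay. The only (immaterial) deviations are that the paper additionally tracks a shrinking candidate set $D_v$ of untested arms for each vertex and finishes with a $\mathsf{ClusterVerify}$-against-all-candidates cleanup rather than a high-confidence $\mathsf{ME}$ call, but your version gives the same bounds.
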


Our algorithm and the algorithm of \cite{peng21a} both consist of two phases, and their first phase is the same.

 In the first phase, $k$ sub-clusters is recovered with high probability (see Algorithm~\ref{alg.subcluster}). In this algorithm, a subset $T$ of vertices is sampled from $V$ uniformly at random, and then query all pairs of vertices in $T$ using $O(|T|^2)$ queries. A graph $H_T = (T, E_T)$ is constructed, where $E_T$ contains all pairs $(u,v) \in T^2$ such that $u\neq v$ and $\tilde{\tau}(u,v) = 1$. Applying standard concentration inequalities, one can show $T$ is $b/2$-balanced, and $H_T$ satisfies the premises of Lemma~\ref{thm.vu} when $|T| = \Omega(\frac{k^2\log n}{b^2\delta^2})$. Thus, the outputs of $\mathsf{BalSBM}$ are the true clusters of $T$, denoted as $T_1,...,T_k$, where $T_i = T\cap V_i$.
\begin{algorithm}[H]
    \caption{$\mathsf{SubCluster}(V,k,\delta,b)$: sub-clusters recovery}
    \label{alg.subcluster}
\begin{algorithmic}[1]
    \STATE Randomly sample $T\subset V$ of size $|T| = \frac{c_0 k^2\log n}{b^2\delta^2}$ for large enough constant $c_0$.
    \STATE Construct $H_T$ using $|T|^2$ queries to the oracle
    \STATE Apply $\mathsf{BalSBM}(H_T,k,\delta, b/2)$ to obtain $X_1,...,X_k$
    \STATE \textbf{output} $X_1,...,X_k$
\end{algorithmic}
\end{algorithm}

\begin{lemma}[\cite{peng21a}]\label{lem:sub-cluster-recovery}
If the input instance $V$ is $b$-balanced, then, with probability $1-|T|^{-7}$, Algorithm~\ref{alg.subcluster} correctly outputs the true clusters $T_1,...,T_k$ of $T$. Moreover, $|T_i|\ge \frac{b|T|}{2k} = \frac{c k\log n}{b\delta^2}$ for some constant $c$ and the sample complexity is $O(\frac{k^4\log^2 n}{b^4\delta^4})$.
\end{lemma}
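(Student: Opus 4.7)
The plan is to reduce sub-cluster recovery on $T$ to an instance of balanced SBM clustering and then invoke Lemma~\ref{thm.vu}. The key observation is that, conditioned on $T$, the graph $H_T$ is distributed exactly as $\mathcal{D}(T_1,\ldots,T_k,\delta)$: for each unordered pair $(u,v)\in T^2$ the edge appears iff $\tilde\tau(u,v)=1$, which by the definition of $\tilde\tau$ happens independently with probability $1/2+\delta/2$ if $u,v$ lie in the same true cluster and with probability $1/2-\delta/2$ otherwise. Hence if we can guarantee that $T_1,\ldots,T_k$ is a $b'$-balanced partition of $T$ for some $b'$ satisfying the premise of Lemma~\ref{thm.vu}, then $\mathsf{BalSBM}(H_T,k,\delta,b')$ will output exactly $T_1,\ldots,T_k$ with probability $1-|T|^{-8}$.

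First I would show that $T$ is $b/2$-balanced with very high probability. Fix any true cluster $V_i$; since $|V_i|\ge bn/k$ and $T$ is a uniform random sample of size $|T|$, we have $\mathbb{E}|T\cap V_i|\ge b|T|/k$. A standard Chernoff bound yields $|T\cap V_i|\ge b|T|/(2k)$ except with probability $\exp(-\Omega(b|T|/k))$; plugging in $|T|=\Theta(k^2\log n/(b^2\delta^2))$ makes this at most $n^{-c}$ for any constant $c$ by tuning $c_0$, and a union bound over the $k\le |T|$ clusters preserves polynomially small failure. On this event, $T_1,\ldots,T_k$ is $b/2$-balanced and $|T|=\Theta(k^2\log n/(b^2\delta^2))\ge c_0' k^2\log |T|/((b/2)^2\delta^2)$ for a sufficiently large $c_0$, so the hypothesis of Lemma~\ref{thm.vu} is met. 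Applying $\mathsf{BalSBM}$ and union-bounding its $|T|^{-8}$ failure probability against the balance failure yields success probability at least $1-|T|^{-7}$.

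The cluster size bound $|T_i|\ge b|T|/(2k) = \Omega(k\log n/(b\delta^2))$ is immediate from $b/2$-balance combined with the choice of $|T|$. The sample complexity equals the number of pair-queries used to build $H_T$, which is $\binom{|T|}{2}=O(k^4\log^2 n/(b^4\delta^4))$, matching the claim. The main obstacle is purely bookkeeping: selecting the constant $c_0$ large enough so that (i) Chernoff concentration on each $|T\cap V_i|$ is $n^{-\Omega(1)}$, (ii) the inequality $|T|\ge c_0'k^2\log|T|/((b/2)^2\delta^2)$ required by Lemma~\ref{thm.vu} survives the loss of a factor $4$ from replacing $b$ by $b/2$ and the switch from $\log n$ to $\log|T|$, and (iii) the two failure events combine to at most $|T|^{-7}$. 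There is no deeper technical difficulty beyond this careful tracking of constants.
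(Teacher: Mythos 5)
Your proposal is correct and matches the approach the paper sketches just before Algorithm~\ref{alg.subcluster}: Chernoff/Hoeffding to show $T$ is $b/2$-balanced, then feed $H_T$ to $\mathsf{BalSBM}$ via Lemma~\ref{thm.vu} and union-bound the two failure events, with $|T_i|\ge b|T|/(2k)$ and the $|T|^2$ query count falling out directly. The paper itself defers the formal proof to \cite{peng21a} and only gives this same informal argument, so there is no substantive difference to report.
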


In the second phase, each $u\in V \setminus T$ identifies its true label. In \cite{peng21a,mazumdar17a}, the algorithm picks $\Theta(\log n/\delta^2)$ distinct vertices from each $T_i$, and query $u$ with them.
If the majority of these answers are $+1$, then add $u$ to $T_i$. The correctness of this method can be proved by applying the Chernoff bound together with a union bound. Note the sample complexity is $O(\frac{nk\log n}{\delta^2})$. Although simple, further improving it seems difficult a priori. Next, we show how to cross this barrier using BAI algorithms.

\subsection{True Cluster Identification via BAI}\label{sec:true-cluster-id-bal}

After the first phase, we get $k$ disjoint sets $\{X_1,...,X_k\}$, each of size $\Omega(\frac{k\log n}{b\delta^2})$; and each $X_i$ is a sub-cluster, i.e., $X_i\subset V_i$. Our key observation is that, for any vertex $u\in V$, identifying the cluster of $u$ can be modeled as a best arm identification problem. To see this, each $X_i$ corresponds to the $i$th arm $a_i$ in BAI. Let $\mathcal{A}$ be an algorithm for $k$-BAI. When $\mathcal{A}$ pulls an arm $a_i$, we pick a fresh vertex $v\in X_i$ and query $(u,v)$. Then $\mathcal{A}$ gets a reward $1$ if the oracle returns $+1$, and a reward $0$ otherwise.
\begin{algorithm}[H]
    \caption{$\mathsf{TrueClusterId}(u,X_1,...,X_k,\delta,\alpha)$}
    \label{alg.bci}
\begin{algorithmic}[1]
    \STATE Simulate $\mathsf{ME}(k,\delta,\alpha)$:
    \REPEAT
    \STATE Receive an arm pull request from $\mathsf{ME}$, say $a_i$
    \STATE Pick a random new vertex $v\in X_i$, and query $(u,v)$ to get the answer $\tilde{\tau}(u,v)$ from the oracle
    \STATE Return reward $r(a_i):=\frac{\tilde{\tau}(u,v)+1}{2}$ to $\mathsf{ME}$
    \UNTIL $\mathsf{ME}$ terminates and outputs an arm $a_{i^*}$
    \STATE \textbf{Output} Index $i^*$
\end{algorithmic}
\end{algorithm}
\begin{lemma}\label{lem:TCI}
Assume $X_i$ is a sub-cluster and $|X_i|\ge \Omega(\frac{k}{\delta^2}\log \frac{1}{\alpha})$ for all $i$. Then for any $u\in V\setminus T$, Algorithm~\ref{alg.bci} correctly identifies the true cluster of $u$ with probability $1-\alpha$ and the sample complexity is $O(\frac{k}{\delta^2}\log \frac{1}{\alpha})$.
\end{lemma}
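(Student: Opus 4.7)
The plan is to carry out the reduction sketched just before the lemma statement and then invoke Lemma~\ref{lem:BAI} as a black box. First I would verify that the random process in Algorithm~\ref{alg.bci} instantiates the $k$-BAI premises stated in the Preliminaries. Fix $u\in V\setminus T$ with true label $c(u)$. Since each $X_i$ is a sub-cluster, it is contained in some true cluster $V_{j_i}$, so for any $v\in X_i$ the pair $(u,v)$ is an intra-cluster pair iff $j_i=c(u)$. By the definition of $\tilde{\tau}$, the reward $r(a_i)=(\tilde{\tau}(u,v)+1)/2$ is Bernoulli with mean $\frac{1}{2}+\frac{\delta}{2}$ if $j_i=c(u)$ and $\frac{1}{2}-\frac{\delta}{2}$ otherwise. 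Hence exactly one of the $k$ arms is optimal, its gap from every other arm's mean is at least $\delta$, and the optimal arm corresponds to the true cluster of $u$.

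Next I would handle independence, which is the only mildly delicate point. The algorithm draws a \emph{new} vertex from $X_i$ on each pull of $a_i$, so across all pulls of $a_i$ the queried pairs $(u,v)$ are distinct; moreover none of these pairs were touched in Algorithm~\ref{alg.subcluster}, which only queries pairs inside $T$, whereas $u\notin T$. Because the noise variables $\sigma_{u,v}$ are independent across distinct pairs, the rewards drawn from any fixed arm are i.i.d.\ Bernoulli with the means computed above, and rewards from different arms are mutually independent. This means the process is exactly the input model assumed by $\mathsf{ME}(k,\delta,\alpha)$, and the validity of the reduction is not affected by $\mathsf{ME}$'s adaptive pull order.

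The remaining point is that we must not exhaust any $X_i$. By Lemma~\ref{lem:BAI}, $\mathsf{ME}(k,\delta,\alpha)$ performs at most $O(\frac{k}{\delta^2}\log\frac{1}{\alpha})$ total pulls, so each individual arm receives at most that many pulls. Under the hypothesis $|X_i|\ge \Omega(\frac{k}{\delta^2}\log\frac{1}{\alpha})$ with a suitable constant, there are always enough fresh vertices to supply the simulation, so the sampling step is well defined throughout. Applying Lemma~\ref{lem:BAI} then yields that $\mathsf{ME}$ returns the best arm, i.e., the true cluster of $u$, with probability at least $1-\alpha$, using $O(\frac{k}{\delta^2}\log\frac{1}{\alpha})$ arm pulls, each of which costs exactly one oracle query. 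This gives both bounds claimed in the lemma; I do not foresee a substantive obstacle beyond the independence bookkeeping just described.
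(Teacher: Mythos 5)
Your proof is correct and follows essentially the same route as the paper's: reduce the arm pulls to fresh oracle queries, observe that the rewards are i.i.d.\ Bernoulli with means $\frac{1}{2}\pm\frac{\delta}{2}$ separated by the gap $\delta$, and invoke Lemma~\ref{lem:BAI} as a black box. The paper's own proof is a terser version of your argument; the independence and "don't exhaust $X_i$" points you spell out are exactly what the paper summarizes in one sentence as "the simulation works as long as there are enough vertices in each $X_i$ to generate i.i.d.\ samples."
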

\begin{proof}
   We simulate $\mathsf{ME}$ on a standard BAI instance, where the best arm corresponds to the true cluster that $u$ belongs to; each arm pull receives an i.i.d.\ reward with mean $\frac{1}{2}+\frac{\delta}{2}$ for the best arm and with mean $\frac{1}{2}-\frac{\delta}{2}$ for the other arms. The simulation works as long as there are enough number of vertices in each $X_i$ to generate i.i.d.\ samples, which is satisfied by our assumption. Then the lemma directly follows from Lemma~\ref{lem:BAI}.
\end{proof}

\subsection{Identify the True Clusters of All Vertices}
Lemma~\ref{lem:TCI} is only for a fixed vertex, and our goal is to identify the true clusters for all vertices simultaneously. An immediate idea is to set $\alpha = 1/n$, so that each of the $n$ vertices find the correct cluster with probability at least $1-n^{-1}$. Then by a union bound, the goal is achieved with constant probability. 
However, the total number of queries is $\Theta(\frac{nk\log n}{\delta^2})$, which is the same as in previous work. 

Instead, we use a recursive strategy to identify true clusters in multiple rounds. In the first round, we apply Algorithm~\ref{alg.bci} on each $u\in V$ with $\alpha=1/4$, and each $u$ gets an index $\tilde{c}(u) \in [k]$. By Lemma~\ref{lem:TCI}, $\tilde{c}(u)$ indicates the true cluster of $u$ with probability at least $3/4$.  
By the Chernoff-Hoeffding bound, except for an exponentially (in $|V|$) small probability, at least $1/2$-fraction of $V$ find their correct clusters. Then, we remove them and repeat the same process on the remaining vertices. Since we don't know which nodes have been labelled correctly, we need a verification step to verify whether $\tilde{c}(u)$ is indeed the correct cluster for each $u$. 

For this step, there is only one candidate cluster to verify, so a standard majority test suffices (Algorithm~\ref{alg.btc}). We pick $\Theta(\frac{\log n}{\delta^2})$ fresh vertices from $T_{\tilde{c}(u)}$ and query $u$ with them. If the majority of the answers are $1$, then we conclude $\tilde{c}(u)$ is the true cluster and remove $u$ in the next round. Otherwise, $u$ is kept for the next round. By the Chernoff-Hoeffding bound, this verification step is correct with probability $1-\frac{1}{n^c}$. By union bound, with probability at least $1-\frac{1}{n^{c-2}}$, all verification steps ever executed are correct, and we assume this event happens. 
\begin{algorithm}[H]
    \caption{$\mathsf{ClusterVerify}(v,B)$}
    \label{alg.btc}
\begin{algorithmic}[1]
    \STATE For each $u\in B$, query $(v,u)$ and get answer $\tilde{\tau}(v,u)$
    \STATE Compute $d(v,B)=\sum_{u\in B}\mathbbm{1}_{\{\tilde{\tau}(v,u)=1\}}$
    \IF{$d(v,B)\ge\frac{1}{2}\abs{B}$}
    \STATE Output \textbf{TRUE} 
    \ELSE
    \STATE Output \textbf{FALSE}
    \ENDIF
\end{algorithmic}
\end{algorithm}

\begin{lemma} \label{thm.btc}
    Let $B$ be a $(\eta,V_i)$-biased set of size at least $\frac{16\log n}{\eta^2\delta^2}$ for some cluster $V_i$, then with probability at least $1-n^{-6}$, the following holds for all $v\in V$ and all clusters $V_i$ simultaneously,\\
        (1) if $v\in V_i$, $\mathsf{ClusterVerify}(v,B)$ returns \textbf{TRUE};\\
        (2) if $v\notin V_i$, $\mathsf{ClusterVerify}(v,B)$ returns \textbf{FALSE}.
\end{lemma}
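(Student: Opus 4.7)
The plan is to analyze the test statistic $d(v,B)$ in the two cases $v\in V_i$ and $v\notin V_i$, show that in each case $E[d(v,B)]$ is separated from the threshold $|B|/2$ by $\Omega(\eta\delta|B|)$, and then apply Hoeffding's inequality plus a union bound. Note that the $|B|$ indicators $\mathbbm{1}\{\tilde{\tau}(v,u)=1\}$ for $u\in B$ are mutually independent, because the oracle's noise variables $\sigma_{v,u}$ are independent across distinct pairs.

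First I would compute the expectations. Write $b_1=|B\cap V_i|$ and, if $c(v)\neq i$, $b_2=|B\cap V_{c(v)}|$; by the $(\eta,V_i)$-bias assumption $b_1\ge(\tfrac12+\eta)|B|$, and in the second case $b_2\le|B|-b_1\le(\tfrac12-\eta)|B|$. For each $u\in B$, $\Pr[\tilde\tau(v,u)=1]$ equals $\tfrac12+\tfrac{\delta}{2}$ if $u$ lies in the same true cluster as $v$, and $\tfrac12-\tfrac{\delta}{2}$ otherwise. Summing these contributions I would get
\[
E[d(v,B)] \;\ge\; \tfrac{|B|}{2}+\eta\delta|B| \quad \text{if } v\in V_i,
\]
\[
E[d(v,B)] \;\le\; \tfrac{|B|}{2}-\eta\delta|B| \quad \text{if } v\notin V_i.
\]
In the first case every $u\in V_i$ (there are $b_1$ such) contributes bias $+\delta$ relative to $\tfrac12$ and every $u\notin V_i$ contributes $-\delta$, giving expectation $\tfrac{|B|}{2}+\tfrac{\delta}{2}(2b_1-|B|)\ge \tfrac{|B|}{2}+\eta\delta|B|$. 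The second case is the slightly more delicate bookkeeping step: one must split $B$ into $B\cap V_i$, $B\cap V_{c(v)}$, and the rest, observing that only $B\cap V_{c(v)}$ contributes a positive bias, and then use $b_2\le(\tfrac12-\eta)|B|$ to conclude.

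Next I would apply Hoeffding's inequality to $d(v,B)$, viewed as a sum of $|B|$ independent $\{0,1\}$ random variables, with deviation parameter $t=\eta\delta|B|$:
\[
\Pr\!\bigl[\,|d(v,B)-E[d(v,B)]|\ge \eta\delta|B|\,\bigr]\;\le\;2\exp\!\bigl(-2\eta^2\delta^2|B|\bigr).
\]
Plugging in $|B|\ge 16\log n/(\eta^2\delta^2)$ makes the right-hand side at most $2n^{-32}$. Whenever the deviation is smaller than $\eta\delta|B|$, the two inequalities above immediately give $d(v,B)\ge |B|/2$ in case (1) and $d(v,B)<|B|/2$ in case (2), so $\mathsf{ClusterVerify}$ returns the correct answer.

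Finally I would take a union bound over all $v\in V$ and all choices of cluster index $i\in[k]$ (at most $nk\le n^2$ events), giving a total failure probability at most $2n^2\cdot n^{-32}\le n^{-6}$. The main ``obstacle'' is really just the second-case expectation computation, since one must be careful not to over-count the vertices of $B$ that happen to share $v$'s own cluster; once that is handled correctly, the rest is a standard concentration-plus-union-bound argument.
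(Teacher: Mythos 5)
Your proof is correct and follows essentially the same route as the paper: compute $E[d(v,B)]$ by splitting $B$ according to whether a vertex shares $v$'s true cluster, show the expectation is separated from the threshold $|B|/2$ by $\eta\delta|B|$ in both cases, apply the Chernoff--Hoeffding bound, and union bound over $v$ and $i$. (This is exactly what the paper's auxiliary Lemma~\ref{thm.d} does; the only cosmetic difference is that the paper uses the half-deviation $\lambda=\tfrac12\eta\delta|B|$ to land at $(\tfrac12\pm\tfrac12\eta\delta)|B|$, whereas you use the full deviation $\eta\delta|B|$ to land directly at the threshold $|B|/2$, giving a slightly smaller but equally sufficient failure probability.)
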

\begin{proof}
    Lemma \ref{thm.d} and union bound for all $v$ and $i$ complete the proof.
\end{proof}

\subsection{Query Complexity of the Second Phase}\label{sec:BalQueryComplexity}
According to the discussion above, the number of remaining vertices after each round decreases by half, i.e., the number of vertices considered in the $i$th round is at most $\frac{n}{2^{i-1}}$. Since we always set $\alpha = 1/4$, Algorithm~\ref{alg.bci}  incurs $O(\frac{nk}{2^{i-1}\delta^2})$ queries in round $i$ by Lemma~\ref{lem:BAI}. The number of queries for verification is $O(\frac{n\log n}{2^{i-1}\delta^2})$, and therefore there are $O(\frac{n(k+\log n)}{2^{i-1}\delta^2})$ queries in the $i$th round. The total number over all rounds is 
$$\sum_{i\ge 1}O(\frac{n(k+\log n)}{2^{i-1}\delta^2}) = O(\frac{n(k+\log n)}{\delta^2}).$$

\subsection{Put It All Together}
The pseudo code of our algorithm for balanced instances is presented in Algorithm~\ref{alg.bnc}.
\begin{algorithm}[H]
    \caption{$\mathsf{BalNoisyClustering}(V,k,\delta,b)$}
    \label{alg.bnc}
\begin{algorithmic}[1]
    \STATE Let $\alpha=1/4$. Initialize $U=V$ 
    \STATE Invoke $\mathsf{SubCluster}(V,k,\delta,b)$ to obtain sub-clusters $X_1,...,X_k$
    \STATE For each $u\in V$, set $D_u=\set{X_1,...,X_k}$ \COMMENT{Candidate clusters of $u$}
    \STATE For each $X_i$, pick arbitrary $X_i'\subseteq X_i$ of size $\frac{1600\log n}{\delta^2}$
    \WHILE{$\abs{U}\ge\frac{c_0 k^2\log n}{b^2\delta^2}$}
        \FOR{each $v\in U$}
            \STATE  $j=\mathsf{TrueClusterId}(v,D_v,\delta,\alpha)$
            \STATE $o=\mathsf{ClusterVerify}(v,X'_j)$
            \IF{$o=$\textbf{TRUE}}
                \STATE Update $X_j=X_j\cup\set{v}$, $U=U\setminus\set{v}$
            \ELSE
                \STATE Update $D_v = D_v \setminus \{X_j\}$
            \ENDIF
        \ENDFOR
    \ENDWHILE
    \FOR{each $v\in U$}
        \FOR{each $X_j\in D_v$}
            \IF{$\mathsf{ClusterVerify}(v,X'_j)=$ \textbf{TRUE}}
                \STATE  Update $X_j = X_j\cup v$
            \ENDIF
        \ENDFOR
    \ENDFOR
    \OUTPUT $k$ clusters $X_1,...,X_k$
\end{algorithmic}
\end{algorithm}
Now we are ready to prove the main theorem for the nearly balanced instances, i.e. Theorem \ref{thm.bnc}.

\textit{Proof of Theorem \ref{thm.bnc}}. First, it holds with probability $1-o_n(1)$ that $X_1,...,X_k$ are always sub-clusters (i.e., $X_i\subset V_i$) during the execution of Algorithm~\ref{alg.bnc}. To see this, with probability $1-o_n(1)$, it is true after invoking $\mathsf{SubCluster}$ in line 2 (Lemma~\ref{lem:sub-cluster-recovery}). Conditioned on this, $X'_i\subset V_i$ for each $i$ ($X_i'$ is $(1/2,V_i)$-biased). Each time we add a vertex $v$ to $X_i$, it holds that $\mathsf{ClusterVerify}(v,X'_j)=$ \textbf{TRUE}, which means by Lemma \ref{thm.btc}, with probability $1-\frac{1}{n^6}$, $v\in V_i$ and after adding $v$, $X_i$ is still a sub-cluster of $V_i$. The claim follows by applying a union bound over all updates to $X_i$. In the cleanup stage (line 16-22 in Algorithm~\ref{alg.bnc}), all remaining vertices will eventually find their correct clusters with probability at least $1-\frac{1}{n^6}$ (again by Lemma \ref{thm.btc}), therefore the correctness of the algorithm is proved.

For the query complexity, in the first phase $\mathsf{SubCluster}$ incurs $O(\frac{k^4\log^2 n}{b^4\delta^4})$ queries (Lemma~\ref{lem:sub-cluster-recovery}). The query complexity of the second phase is analyzed in subsection~\ref{sec:BalQueryComplexity}, which is  $O(\frac{n(k+\log n)}{\delta^2})$. In the cleanup stage, each of the remaining vertices invokes  $\mathsf{ClusterVerify}$ less than $k$ times, and thus the total number of queries is 
$O(\frac{k^3\log^2 n}{b^2 \delta^4})$. To sum up the query complexiy of Algorithm~\ref{alg.bnc} is at most
$$
O\left(\frac{n(k+\log n)}{\delta^2}+\frac{k^4\log^2 n}{b^4\delta^4}\right).
$$
And the running time of our algorithm is polynomial in $n,k,1/\delta$. \qed

\section{Algorithm for General Instances}\label{sec:generalCase}
For general instances, let $s_1,...,s_k$ be the sizes of $V_1,...,V_k$ respectively. W.l.o.g, we assume $s_1\ge s_2\cdots \ge s_k$. Similar as for balanced instances, we still randomly sample a subset $T$ of size $\poly(k,1/\delta,\log n)$ and query all pairs of vertices in $T$. As before, let $H_T=(T, E_T)$ be the graph with positive answers and define $T_i = T\cap V_i$. Now $T$ might be extremely unbalanced and we cannot apply Lemma~\ref{thm.vu}. 

The idea of Peng \& Zhang \yrcite{peng21a} is to first prune all vertices in $T$ that belong to small clusters. Let $T'$ be the remaining vertices, which is then a balanced instance. Peng \& Zhang \yrcite{peng21a} observe that if $u$ belongs to a small cluster, then its degree in $H_T$ is small, and vice versa. The next lemma provides a formal statement of this intuition, which can be proved by the Chernoff-Hoeffding bound.
\begin{lemma}[\cite{peng21a}]\label{lem:degreeGap}
    Let $b\in[0,1/2], h\in [k]$, and $d_h = \frac{t}{2}-\left(\frac{1}{2}-\frac{1}{k}+\frac{(h+1/2)b}{k^2}\right)\delta t$. Assume $t=|T| = \frac{64 k^4 \log n}{b^2 \delta^2}$ and $u\in T_i$, then it holds with probability $1-\frac{1}{n^7}$: (1) If $|V_i| \le \frac{n}{k} - \frac{b(h+1)n}{k^2}$, then the degree of $u$ in $H_T$ is smaller than $d_h$; (2) If $|V_i|\ge \frac{n}{k} - \frac{bhn}{k^2}$, then the degree of $u$ in $H_T$ is larger than $d_h$.
\end{lemma}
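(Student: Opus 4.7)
The plan is to directly estimate $\deg_{H_T}(u)$ via Chernoff--Hoeffding concentration, controlling both the randomness in the sample $T$ (which determines $|T_i|$) and the randomness in the noisy query answers (which determine which pairs become edges of $H_T$).

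First, I condition on $T$. For $u \in T_i$, $\deg_{H_T}(u)=\sum_{v\in T\setminus\{u\}}\mathbbm{1}_{\{\tilde\tau(u,v)=1\}}$ is a sum of $t-1$ independent Bernoulli variables, and its conditional expectation is
\begin{align*}
E[\deg_{H_T}(u)\mid T] &= (|T_i|-1)\bigl(\tfrac{1}{2}+\tfrac{\delta}{2}\bigr) + (t-|T_i|)\bigl(\tfrac{1}{2}-\tfrac{\delta}{2}\bigr) \\
&= \tfrac{t(1-\delta)}{2} + \delta|T_i| - \tfrac{1+\delta}{2}.
\end{align*}
Hoeffding's inequality then yields $\bigl|\deg_{H_T}(u)-E[\deg_{H_T}(u)\mid T]\bigr|\le 2\sqrt{t\log n}$ with conditional probability at least $1-2n^{-8}$.

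Second, marginally $|T_i|$ is hypergeometric with mean $\mu_i=|V_i|t/n$, so Hoeffding's inequality for sampling without replacement (Serfling's bound) gives $\bigl||T_i|-\mu_i\bigr|\le 2\sqrt{t\log n}$ with probability at least $1-2n^{-8}$. Combining these two estimates,
\[
\deg_{H_T}(u) = \tfrac{t(1-\delta)}{2} + \delta\cdot\tfrac{|V_i|\,t}{n} \pm O(\sqrt{t\log n}),
\]
and with $t=64k^4\log n/(b^2\delta^2)$ the fluctuation term is of order $k^2\log n/(b\delta)$.

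Third, I rewrite the threshold as $d_h = \tfrac{t(1-\delta)}{2} + \delta t\cdot\bigl(\tfrac{1}{k} - \tfrac{(h+1/2)b}{k^2}\bigr)$, so the signed gap between the expected degree and $d_h$ equals $\delta t \cdot\tfrac{1}{n}\bigl(|V_i| - \tfrac{n}{k} + \tfrac{(h+1/2)bn}{k^2}\bigr)$. Under the assumption of case (1), this gap is at most $-\tfrac{\delta tb}{2k^2}$; under case (2), it is at least $+\tfrac{\delta tb}{2k^2}$. Substituting $t$ shows both magnitudes are $\Theta(k^2\log n/(b\delta))$, and taking the constant in front of $t$ large enough (the stated $64$, possibly enlarged for clean constants) makes this strictly exceed the $O(\sqrt{t\log n})$ concentration error. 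Therefore $\deg_{H_T}(u)<d_h$ in case (1) and $\deg_{H_T}(u)>d_h$ in case (2). A final union bound over $u\in T$ (at most $n$ vertices) upgrades the failure probability to at most $n^{-7}$.

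The main difficulty is the constant bookkeeping: one must check that the additive gap $\delta tb/(2k^2)$, produced by a cluster-size jump of $\Theta(bn/k^2)$ amplified by the $\delta$-difference in edge probabilities, strictly dominates the combined $\sqrt{t\log n}$-sized fluctuations from both the random sampling of $T$ and the noisy edges. This trade-off is exactly what forces the choice $t = \Theta(k^4\log n/(b^2\delta^2))$.
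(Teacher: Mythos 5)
Your proof is correct and uses the standard two-level Chernoff--Hoeffding argument: first concentrate $|T_i|$ around its mean $|V_i|t/n$ (sampling without replacement), then concentrate $\deg_{H_T}(u)$ around its conditional mean given $T$, and check that the signed gap $\delta t(|V_i|-\tfrac{n}{k}+\tfrac{(h+1/2)bn}{k^2})/n$, which is at least $\tfrac{\delta t b}{2k^2}=\tfrac{32k^2\log n}{b\delta}$ in magnitude, dominates both the $O(\sqrt{t\log n})$ degree fluctuation and the $\delta\cdot O(\sqrt{t\log n})$ contribution from the $|T_i|$ fluctuation. The paper itself does not prove this lemma (it is cited to Peng \& Zhang), but the proof of the closely analogous Lemma~\ref{lem:gapSBM} in the appendix follows exactly this decomposition, so your approach matches the paper's.
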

A natural idea is to remove all vertices in $T$ with degree smaller than $d_h$ and let $T'$ be the remaining vertices. We are guaranteed that, if $|V_i| \le \frac{n}{k} - \frac{b(h+1)n}{k^2}$ then $V_i \cap T' = \emptyset$; and if $|V_i| \ge \frac{n}{k} - \frac{bhn}{k^2}$, then $T_i\subset T'$. However, $T'$ can still be unbalanced, since for $V_i$ with size in between, $|V_i\cap T'|$ could be some positive but small integer. Peng \& Zhang \yrcite{peng21a} then prove that if $s_k< bn/k$ (the input is unbalanced), then there must exists $h < k$ such that $s_{h+1} < \frac{n}{k} - \frac{b(h+1)n}{k^2}$ and $s_h\ge \frac{n}{k} - \frac{bhn}{k^2}$. Then, after pruning vertices with degree smaller than $d_h$,\footnote{The only problem is that such $h$ exists but is unknown, which will be discussed later.} $T'$ only contains vertices from $V_1,...,V_h$ and is $\frac{h}{2k}$-balanced, whose clusters can be recovered by applying $\mathsf{BalSBM}$ (Lemma~\ref{thm.vu}) and let $X_1,...,X_h$ be the clusters of $T'$. 

Next, we grow $X_1,...,X_h$ to $V_1,...,V_h$ by identifying the true cluster of each $u \in V\setminus T'$ as before. In \cite{peng21a}, this is achieved by querying $u$ with $\Theta(\frac{\log n}{\delta^2})$ vertices from each $X_i$ and checking whether the majority is positive. After getting $V_1,...,V_h$, the remaining clusters are solved recursively. Again the query complexity is suboptimal and we wish to improve it using BAI.

\subsection{A New Size-gap Lemma}
Similar as in the balanced case, for each $u\in V\setminus T'$, we invoke $\mathsf{TrueClusterId}(u,X_1,...,X_h,\delta,1/4)$ (Algorithm~\ref{alg.bci}) to identify the cluster label of $u$, and then invoke $\mathsf{ClusterVerify}$ (Algorithm~\ref{alg.btc}) to verify the cluster. However, the same argument as for the balanced case does not work any more. Let $n_i$ be the number of remaining vertices in the beginning of the $i$th round. In the balanced case, at least half of the $n_i$ vertices will be settled in the $i$th round. For the unbalanced case, this is not true when $h<k$, since it is possible that the majority of the $n_i$ vertices belongs to $V_{h+1},...,V_k$; such vertices will never be settled in this round. 

To overcome the above obstacle, we first strengthen the size-gap lemma of \cite{peng21a}. We show that for any unbalanced instance, there exists $h<k$ such that there is a gap between $s_h$ and $s_{h+1}$. Moreover, the number of vertices in $V_1,...,V_h$ is $\Theta(n)$. 

\begin{lemma}\label{lem:sizegap}
    If $s_k<\frac{n}{4k}$, then there exists $h<k$ such that
    \begin{align*}
        s_h \ge \frac{n}{2k}-h\cdot\frac{n}{4k^2} \textnormal{ and } 
        s_{h+1} < \frac{n}{2k}-(h+1)\cdot\frac{n}{4k^2}.
    \end{align*}
    Moreover, $\sum_{i\le h} s_i \ge n/2$.
\end{lemma}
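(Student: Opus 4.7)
The plan is to introduce a one-parameter family of thresholds and locate the gap by a simple extremal argument on the sorted sizes $s_1 \geq s_2 \geq \cdots \geq s_k$. Define
\[
\tau_h \;:=\; \frac{n}{2k} - h\cdot \frac{n}{4k^{2}}, \qquad h = 0,1,\ldots,k,
\]
so that the statement to be proved is precisely $s_h \geq \tau_h$ and $s_{h+1} < \tau_{h+1}$. I would first record the two boundary observations: since $s_1 \geq n/k$ (the largest cluster is at least the average), we have $s_1 \geq n/k \geq \tau_1$; and by hypothesis $s_k < n/(4k) = \tau_k$. Therefore the set $\{h \in [k] : s_h \geq \tau_h\}$ is nonempty but does not contain $k$. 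Taking $h$ to be its maximum yields $h \in \{1,\ldots,k-1\}$ with $s_h \geq \tau_h$ by construction and $s_{h+1} < \tau_{h+1}$ by maximality, which gives the two inequalities of the lemma.

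For the mass bound $\sum_{i \leq h} s_i \geq n/2$, I would pass to the complement. Because the sequence is non-increasing, $s_i \leq s_{h+1} < \tau_{h+1}$ for every $i \geq h+1$, so
\[
\sum_{i = h+1}^{k} s_i \;<\; (k-h)\,\tau_{h+1} \;=\; (k-h)\left(\frac{n}{2k} - (h+1)\frac{n}{4k^{2}}\right).
\]
A short expansion rewrites the right-hand side as $\tfrac{n}{2} - \tfrac{n\bigl(2hk + (k-h)(h+1)\bigr)}{4k^{2}}$, and the bracketed quantity is strictly positive as soon as $h \geq 1$ and $h < k$. Hence $\sum_{i>h} s_i < n/2$, which is equivalent to $\sum_{i\leq h} s_i > n/2$, as required.

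There is no genuine obstacle here; the entire proof is essentially the choice of thresholds plus monotonicity. The one point that needs care is ensuring the extremal index lies strictly below $k$, which is exactly what the hypothesis $s_k < n/(4k)$ is calibrated to enforce; and checking that the complement-sum loses a positive amount from $n/2$ \emph{regardless} of where the gap falls, which is why the telescoping term $h\cdot n/(4k^{2})$ is needed inside $\tau_h$. The qualitative strengthening over the analogous lemma in \cite{peng21a} comes from anchoring at $n/(2k)$ rather than $n/k$, which is precisely what makes the $\Theta(n)$ mass on $V_1,\ldots,V_h$ available for the recursion.
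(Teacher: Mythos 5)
Your proof is correct and takes essentially the same route as the paper: the paper defines $f(i) = s_i - \bigl(\tfrac{n}{2k} - i\cdot\tfrac{n}{4k^2}\bigr)$, notes $f(1)>0$ and $f(k)<0$, and chooses $h$ as the largest index with $f(h)\ge 0$, which is the same extremal choice as your $\max\{h : s_h \ge \tau_h\}$. The only cosmetic difference is in the mass bound, where the paper bounds each tail term directly by $n/(2k)$ rather than by $\tau_{h+1}$ with an explicit expansion, but both arguments rely on the same facts ($h\ge 1$ and $s_i < \tau_i$ for $i>h$).
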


Note this is still not sufficient. For example, it is possible that $h = \Theta(k)$ and $\sum_{i>h} s_i = \Theta(n)$; for this case, each $\mathsf{TrueClusterId}$ incurs $\Theta(h/\delta^2) = \Theta(k/\delta^2) $ queries and in each round $\mathsf{TrueClusterId}$ is executed on $\Theta(n)$ vertices, and thus $\Theta(nk/\delta^2)$ queries. Note it needs $\Omega(\log n)$ rounds to ensure all vertices in $V_1,...,V_h$ have been identified, which is required for the success of the algorithm from \cite{peng21a}. We take a different strategy. Given sub-clusters, we do not try to recover all vertices belongs to those clusters, but stop after one round of filtering. By the new size-gap lemma, this removes a constant fraction of vertices. Then we apply the same procedure on the rest of vertices recursively. Details are provided in Section~\ref{sec.merge}. 

\subsection{Recovering Sub-clusters}
Based on the idea outlined above, we describe a subroutine, which outputs $h$ sub-clusters assuming $h$ is given. We first randomly sample a set $T\subset V$ of size $t=\frac{ck^4\log n}{\delta^2}$, and then query all pairs $u,v\in T$ and construct the graph $H_T=(T,E_T)$.
Next, we remove small-degree vertices and apply $\mathsf{BalSBM}$ (Lemma~\ref{thm.vu}) on the remaining graph 
(see Algorithm~\ref{alg:GapSBM}).

\begin{algorithm}[H]
    \caption{$\mathsf{GapSBM}(T,h,\delta)$}
    \label{alg:GapSBM}
\begin{algorithmic}[1]
    \STATE Remove all vertices in $H_T$ with degree less than $d_h:= (\frac{1}{2}-\frac{\delta}{2})t+\delta\cdot(\frac{1}{2k}-\frac{h+1/2}{4k^2})t$
    \STATE Let $T_h'$ be the set of remaining vertices and $H_{T_h'}$ be the graph induced by $T_h'$ 
    \STATE Invoke $\mathsf{BalSBM}(H_{T_h'},h ,\delta,\frac{h}{4k})$ to obtain clusters $X_1,...,X_h$
    \STATE \textbf{output} $X_1,...,X_h$
\end{algorithmic}
\end{algorithm}
The algorithm is essentially the same as in \cite{peng21a}, however, equipped with our new size-gap lemma, the guarantee is stronger.
\begin{lemma}\label{lem:gapSBM}
    Let $T$ be a random subset of $V$ of size $t$. With probability $1-O(k^{-24}\log^{-8}n)$, we have\\
    (1) If $h$ is the index from Lemma~\ref{lem:sizegap}. Then, the output of $\mathsf{GapSBM}$ are $h$ sub-clusters of $V_1,...,V_h$ whose sizes satisfy $s_i\ge t/4k$ and $\sum_{i\le h} s_i \ge n/2$. \\
    (2) For an arbitrary $h\in [k]$, all vertices in $T$ that belongs to $V_i$ with $s_i\ge \frac{n}{2k}-h\cdot\frac{n}{4k^2}$ must be contained in the output.
\end{lemma}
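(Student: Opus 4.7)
The plan is to verify the algorithm in three structural stages: (i) concentration of the sample sizes $|T\cap V_i|$, (ii) concentration of vertex degrees in $H_T$, which in turn makes the degree-pruning do the right thing, and (iii) invocation of Lemma~\ref{thm.vu} on the surviving graph. For (i), since $T$ is a uniform random subset of $V$ of size $t=\Theta(k^4\log n/\delta^2)$, a Chernoff bound gives $\abs{|T\cap V_i|-ts_i/n}=O(\sqrt{t\log n})$ simultaneously for every $i\in[k]$, with failure probability at most, say, $1/n^{10}$.

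For step (ii), fix $u\in T\cap V_i$; the degree of $u$ in $H_T$ is a sum of $|T|-1$ independent $\set{0,1}$ random variables with expectation
$$E[\deg(u)]\approx\bigl(\tfrac{1}{2}-\tfrac{\delta}{2}\bigr)t+\delta\cdot\tfrac{ts_i}{n}.$$
Subtracting the threshold $d_h$ used in Algorithm~\ref{alg:GapSBM} gives $E[\deg(u)]-d_h\approx \delta t\bigl(s_i/n-1/(2k)+(h+1/2)/(4k^2)\bigr)$, which exceeds $+\delta t/(8k^2)$ whenever $s_i\ge n/(2k)-hn/(4k^2)$ and falls below $-\delta t/(8k^2)$ whenever $s_i\le n/(2k)-(h+1)n/(4k^2)$. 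Because $t=\Theta(k^4\log n/\delta^2)$ makes this gap far larger than $\sqrt{t\log n}$, a Hoeffding bound and a union bound over all $u\in T$ show that every degree lies on the ``correct'' side of $d_h$ with failure at most $1/n^9$. Claim~(2) then follows immediately: any $u\in T\cap V_i$ with $s_i\ge n/(2k)-hn/(4k^2)$ survives the pruning for that particular $h$.

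For claim~(1), I would instantiate $h$ as the index from Lemma~\ref{lem:sizegap}. By that lemma, clusters $V_1,\ldots,V_h$ all satisfy $s_i\ge n/(2k)-hn/(4k^2)\ge n/(4k)$ (since $h<k$), so they survive the pruning and each contributes $|T\cap V_i|\ge ts_i/n-O(\sqrt{t\log n})\ge t/(4k)$ vertices to $T_h'$, while $V_{h+1},\ldots,V_k$ all fall below the lower threshold and contribute nothing. Consequently $T_h'$ partitions exactly into $h$ true sub-clusters of minimum size $\ge t/(4k)$; the induced instance is $(h/(4k))$-balanced, and its size $|T_h'|\ge h\,t/(4k)$ comfortably satisfies the precondition of Lemma~\ref{thm.vu} once the constant in $t$ is large enough. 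Invoking $\mathsf{BalSBM}(H_{T_h'},h,\delta,h/(4k))$ then recovers these $h$ sub-clusters exactly, and the bound $\sum_{i\le h}s_i\ge n/2$ is inherited verbatim from Lemma~\ref{lem:sizegap}.

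The main obstacle is book-keeping the overall failure probability down to the claimed $O(k^{-24}\log^{-8}n)$. The two concentration steps above can be driven below $1/n^{10}$ for free, so the bottleneck is the $|T_h'|^{-8}$ guarantee coming from Lemma~\ref{thm.vu}. Plugging in $|T_h'|\ge h\,t/(4k)=\Omega(k^3\log n/\delta^2)$ yields $|T_h'|^{-8}=O\bigl(\delta^{16}k^{-24}\log^{-8}n\bigr)$, which matches the lemma's stated bound; the delicate point here is that this estimate relies on the strengthened size-gap lemma giving $s_h\ge n/(4k)$, rather than the merely-positive gap available in earlier work.
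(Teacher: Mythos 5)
Your proof follows the same structure as the paper's: Chernoff concentration of the sample sizes $|T\cap V_i|$, Chernoff concentration of vertex degrees around their expectations to justify the pruning threshold $d_h$, and then an invocation of Lemma~\ref{thm.vu} after verifying the $(h/4k)$-balance and size preconditions on $T_h'$. Your explicit tracing of the failure probability to $|T_h'|^{-8}=O(k^{-24}\log^{-8}n)$ (the paper leaves this implicit) and your direct treatment of part~(2) are both consistent with the paper's argument, so the proposal is correct and takes essentially the paper's approach.
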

The proof follows the analysis of \cite{peng21a}. For completeness, we include a proof in Appendix~\ref{app:generalCase}.

Before giving the full algorithm, there is another issue to deal with: $h$ from Lemma~\ref{lem:sizegap} is unknown. We simply apply $\mathsf{GapSBM}(T, h, \delta)$ for all $h\in [k]$ in decreasing order, and then perform a test on the output to check whether all subsets returned are indeed sub-clusters. This idea was successfully implemented in \cite{peng21a}. Our algorithm $\mathsf{GapSubcluster}$ (Algorithm~\ref{alg:GapSubcluster}) follows the same idea and the proof of the following lemma is given in Appendix~\ref{app:generalCase}. Our proof is a non-trivial extension due to part (c). 
\begin{lemma}\label{lem:gapCluster-unknown-h}
    Let $T$ be a random subset of $V$ of size $t=\frac{ck^4\log n}{\delta^2}$. $\mathsf{GapSubcluster}(T, k, \delta)$ outputs $X_1,...,X_h$ for some $h\in [k]$ or \textbf{Fail}. If $s_k <|V|/4k $, then with probability $1-O(k^{-24}\log^{-8}n)$, it does not output \textbf{Fail}. If the algorithm does not \textbf{fail} then
    \vspace{-0.2cm}
    \begin{enumerate}
        \item[(a)] $|X_i| \ge \frac{t}{4k}$.
        \item[(b)] $X_i$ is $(0.1, V_{g(i)})$-biased for some $g(i) \in [k]$.
        \item[(c)] $\sum_{j: g^{-1}(j)\neq \emptyset} |V_j| \ge n/5$.
    \end{enumerate}
        \vspace{-0.2cm}
    The query complexity is $O(t^2)$.
\end{lemma}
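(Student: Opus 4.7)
The plan is to instantiate $\mathsf{GapSubcluster}$ along the lines of \cite{peng21a}: after making all $\binom{t}{2}=O(t^2)$ queries once to build $H_T$, loop $h$ from $k$ down to $1$, invoke $\mathsf{GapSBM}(T,h,\delta)$ on the precomputed graph, and then run a cheap internal verification on the candidate output $Y_1,\ldots,Y_h$---random same-cluster queries inside each $Y_i$ and cross queries between each pair $(Y_i,Y_j)$---to decide whether to accept. The algorithm returns the first accepted $h$, or outputs \textbf{Fail} if none is accepted. Since the verification only costs $\mathrm{poly}(k/\delta,\log n)=o(t^2)$ extra queries, the overall query complexity stays $O(t^2)$. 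The first step of the proof is to union-bound Lemma~\ref{lem:gapSBM} over all $k$ attempted values of $h$, and to union-bound the correctness of all verification tests, producing a global good event $\mathcal{E}$ of probability $1-O(k^{-23}\log^{-8} n)$ under which both parts of Lemma~\ref{lem:gapSBM} hold for every $h$ and every verification decision is correct.

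Under $\mathcal{E}$, suppose first that $s_k<n/(4k)$. Lemma~\ref{lem:sizegap} supplies an index $h^\star$, and Lemma~\ref{lem:gapSBM}(1) guarantees that the iteration at $h=h^\star$ returns a genuine partition into $h^\star$ sub-clusters of total mass $\ge n/2$, which the verification must accept; hence the outer loop accepts some $h\ge h^\star$ and cannot reach \textbf{Fail}. This establishes the non-failure probability. Parts (a) and (b) then follow uniformly across any successful iteration: the size bound $|X_i|\ge t/(4k)$ is inherited from the $\mathsf{BalSBM}$ subcall inside $\mathsf{GapSBM}$, which returns a $\tfrac{h}{4k}$-balanced partition of the post-filter vertex set of $T$; the verification's internal majority test forces at least a $0.6$-fraction of each $X_i$ to lie in a single true cluster $V_{g(i)}$, which is exactly the $(0.1,V_{g(i)})$-bias, while the cross-pair test forbids $g(i)=g(j)$ for distinct $i,j$.

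The main obstacle, and the new piece compared to \cite{peng21a}, is part (c). The plan is as follows. Because the outer loop scans $h$ in decreasing order and $\mathcal{E}$ guarantees acceptance at $h^\star$ whenever $s_k<n/(4k)$, the accepted index satisfies $h\ge h^\star$ (set $h^\star:=k$ in the balanced case $s_k\ge n/(4k)$). Lemma~\ref{lem:gapSBM}(2) then implies that every vertex of $T$ lying in any $V_j$ with $s_j\ge n/(2k)-h\cdot n/(4k^2)$ survives the degree filter and hence belongs to $\bigcup_i X_i$; in particular $T\cap (V_1\cup\cdots\cup V_{h^\star})\subseteq \bigcup_i X_i$. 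A Chernoff-Hoeffding bound absorbed into $\mathcal{E}$ gives $|T\cap(V_1\cup\cdots\cup V_{h^\star})|\ge (1-o(1))\tfrac{t}{n}\sum_{j\le h^\star} s_j \ge 0.45\,t$ via Lemma~\ref{lem:sizegap}, hence $\sum_i |X_i|\ge 0.45\,t$. The $0.6$-fraction bias of each $X_i$ toward $V_{g(i)}$ then yields $|T\cap\bigcup_{j\in\mathrm{image}(g)} V_j|\ge 0.27\,t$, and one more Chernoff bound converts this into $\sum_{j\in\mathrm{image}(g)}|V_j|\ge n/5$. The delicate point, where most of the effort in the verification calibration goes, is choosing the number of same-cluster and cross-cluster samples so that the test simultaneously accepts the genuine partition at $h^\star$ and rejects every output whose $X_i$'s violate the stated $(0.1,V_{g(i)})$-bias or the injectivity of $g$; balancing these two directions is the only substantive new calibration beyond the \cite{peng21a} argument, and it drives the constants appearing above.
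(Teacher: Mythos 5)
Your proposal captures the key new idea of the paper---that $h\ge h^\star$ forces $\sum_i|X_i|=\Theta(t)$, and the bias of each $X_i$ then pushes a constant fraction of the total mass onto $\bigcup_{j\in\mathrm{image}(g)}V_j$---and your arithmetic for part (c) parallels the paper's (the paper gets $\sum_{j=1}^h x_j\ge 0.4t$ and then $\sum_{j:g^{-1}(j)\neq\emptyset}s_j\ge 0.5\cdot\frac{n}{t}\sum_j x_j\ge n/5$). However, the verification step you propose is substantively different from the paper's, and two of your claims about it are problematic.

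First, the paper does not verify with internal queries. It reuses the $\mathsf{TestBias}$ subroutine from \cite{peng21a}: sample $O(k\log n/b)$ external vertices $v\in V$ and accept $X_i$ if some $v$ has $d(v,X_i)\ge(\tfrac12+\tfrac12\eta\delta)|X_i|$. This directly certifies that a $(\tfrac12+\Omega(\eta))$-fraction of $X_i$ lies in a single cluster $C_v$. Your ``internal majority test'' estimates the intra-pair agreement rate $\sum_j p_j^2$ (where $p_j$ is the fraction of $X_i$ in $V_j$), and a simple threshold at $\tfrac12$ would \emph{not} establish $(0.1,V_{g(i)})$-bias: a set split $55$--$45$ between two clusters has $\sum_j p_j^2\approx 0.505>\tfrac12$ but $\max_j p_j=0.55<0.6$. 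Your idea can be repaired using $\max_j p_j\ge\sum_j p_j^2$ with a threshold strictly above $0.6$ (which still accepts the genuine sub-clusters at $h^\star$, where $\sum_j p_j^2=1$), but this is a stronger threshold than ``majority'' and is exactly the calibration you wave off at the end; the paper's $\mathsf{TestBias}$ sidesteps the issue entirely.

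Second, your claim that ``the cross-pair test forbids $g(i)=g(j)$ for distinct $i,j$'' conflicts with the paper, which explicitly notes that when $h\neq h^\star$ one may have $g(i)=g(j)$ and designs the argument to tolerate it. Enforcing injectivity when $X_i,X_j$ are each only $60\%$ pure is not obviously possible (the same-cluster overlap rate of a pair with $g(i)=g(j)$ is at least $0.36$, which is confounded with the $g(i)\neq g(j)$ case once the impure $40\%$ is arbitrary), and---fortunately---your own part-(c) argument never uses injectivity: you correctly sum $|X_i\cap V_{g(i)}|$ over disjoint $X_i$ to lower-bound $|T\cap\bigcup_{j\in\mathrm{image}(g)}V_j|$, which is valid whether or not $g$ collides. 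Drop the injectivity claim and replace the internal verification with $\mathsf{TestBias}$ (or fix the threshold) and your argument lines up with the paper's.
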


\subsection{True Cluster Identification: One Round}\label{sec:gap-cluster-identification-1round}
By Lemma~\ref{lem:gapCluster-unknown-h}, $\mathsf{GapSubcluster}$ outputs $X_1,...,X_h$ such that $X_i$ is $(0.1, C)$-biased for some cluster $C$ in $V_1,..., V_k$. Similar as in Section~\ref{sec:true-cluster-id-bal}, we use $\mathsf{TrueClusterId}(u, X_1,..., X_h, \delta, 1/4)$ to identify the cluster of $u$. Here the situation is slightly different, since now each $X_i$ is not a strict sub-cluster. But the same algorithm still works, except that we need to use a smaller $\delta$, denoted as $\delta'$ to be determined later. To simulate an arm $a_i$ for $i\in[h]$, we randomly sample a vertex $v$ from $X_i$ and query $(u,v)$, and the reward is $r(a_i) = \frac{\tilde{\tau}(u,v)+1}{2}$. Suppose $u\in V_{g(i)}$, the expected reward of pulling $a_i$ is at least 
$$(\frac{1}{2}+0.1)(\frac{1}{2}+\frac{\delta}{2}) + (\frac{1}{2}-0.1) (\frac{1}{2}-\frac{\delta}{2})=  \frac{1}{2} + \frac{\delta}{10}.$$
Similarly, if $u\notin V_{g(i)}$, the reward is at most $\frac{1}{2}-\frac{\delta}{10}$.
So, with probability $3/4$, the true cluster of $u$ is identified by $\mathsf{ME}(h,\frac{\delta}{5},1/4)$, and the query complexity of which is the same up to a constant.
One subtlety is that the random vertex $v$ may have been sampled before, and the resulting rewards may not satisfy i.i.d.\ condition. Note, if $X_i$ is a strict sub-cluster, we could just pick any $v$ that has not been queried before. However, the correctness of BAI on such a reward distribution still holds up to a constant blowup in the query complexity by a more careful concentration argument.
\begin{lemma} \label{lem:true-cluster-id-biased}
    Let $X_1,...,X_h$ be the subsets obtained in Lemma~\ref{lem:gapCluster-unknown-h}. if $u$ belongs to $V_{g(i^*)}$ for some $i^*$, then $\mathsf{TrueClusterId}(u, X_1,..., X_h, \delta/5, \alpha)$ returns $i^*$ with probability $1-\alpha$.
\end{lemma}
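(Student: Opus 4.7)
The plan is to reduce to Lemma~\ref{lem:BAI} after verifying that the bandit instance simulated inside $\mathsf{TrueClusterId}$ has a sufficient mean gap, and to handle the mild non-independence of the samples via a direct concentration argument.

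First I would compute the mean reward of each arm in the simulation. A pull of arm $a_i$ picks $v\in X_i$ and returns $r(a_i)=(\tilde\tau(u,v)+1)/2$. Writing $p_i$ for the fraction of $X_i$ lying in $V_{c(u)}$ and averaging over both the sampling of $v$ and the oracle noise $\sigma_{u,v}$, the mean is $\mu_i = 1/2 + \delta(p_i - 1/2)$. For $i=i^*$, the $(0.1,V_{g(i^*)})$-biasedness of $X_{i^*}$ together with $g(i^*)=c(u)$ yields $p_{i^*}\ge 0.6$, hence $\mu_{i^*}\ge 1/2+\delta/10$. For $i\neq i^*$, injectivity of $g$ (guaranteed by the sub-cluster recovery inside $\mathsf{GapSBM}$) forces $g(i)\neq c(u)$, so the $(0.1,V_{g(i)})$-biasedness gives $p_i\le 0.4$, and thus $\mu_i\le 1/2-\delta/10$. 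The gap between $\mu_{i^*}$ and every other $\mu_i$ is therefore at least $\delta/5$, matching the parameter of $\mathsf{ME}(h,\delta/5,\alpha)$ invoked inside $\mathsf{TrueClusterId}$.

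Next I would address the fact that the samples are not literally i.i.d.\ Bernoulli, since the oracle is consistent (repeated queries of the same pair return identical answers) and $X_i$ is merely biased rather than a strict sub-cluster. Conditioning on the full noise vector $\{\sigma_{u,v}\}_{v\in X_i}$ fixes the empirical mean $\hat\mu_i = |X_i|^{-1}\sum_{v\in X_i}(\tilde\tau(u,v)+1)/2$ as a function of $\sigma$. Because $|X_i|\ge t/(4k) = \Omega(k^3\log n/\delta^2)$ (Lemma~\ref{lem:gapCluster-unknown-h}) is polynomially larger than the $O(h/\delta^2\cdot\log(1/\alpha))$ pulls issued by $\mathsf{ME}$, a Hoeffding bound on the independent $\sigma_{u,v}$'s gives $|\hat\mu_i-\mu_i|=o(\delta)$ with probability $1-n^{-\Omega(1)}$. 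Conditioned on this event, $\hat\mu_i$ still separates the arms by $\delta/5-o(\delta)$, and Serfling's inequality for sampling without replacement gives concentration of the observed sample mean around $\hat\mu_i$ that is at least as strong as the Hoeffding bound underpinning the median-elimination analysis. Hence the proof of $\mathsf{ME}(h,\delta/5,\alpha)$ carries over verbatim (absorbing the $o(\delta)$ slack into the constants) and returns the unique best arm $i^*$ with probability $1-\alpha$.

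The main obstacle is exactly this non-i.i.d.\ reward structure: because $X_i$ is only $(0.1,V_{g(i)})$-biased and the oracle is consistent, the bandit instance deviates from the clean i.i.d.\ Bernoulli setup required by Lemma~\ref{lem:BAI}. The key quantitative lever is that $|X_i|$ is polynomially larger than the pull budget of $\mathsf{ME}$, so the empirical mean $\hat\mu_i$ is a faithful proxy for $\mu_i$, collisions are rare, and Serfling concentration drops into the place of the standard Hoeffding bound without further loss.
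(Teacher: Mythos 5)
Your proposal follows the same route as the paper: compute the per-arm mean rewards, establish a $\delta/5$ gap, invoke $\mathsf{ME}(h,\delta/5,\alpha)$, and then address the non-i.i.d.\ nature of the reward stream. The Serfling-plus-Hoeffding conditioning argument fleshes out what the paper only alludes to (``a more careful concentration argument''), and your observation that $|X_i| = \Omega(k^3\log n/\delta^2)$ is polynomially larger than the $O(h\log(1/\alpha)/\delta^2)$ pull budget of $\mathsf{ME}$ is the right quantitative lever for making that handwave rigorous.

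One claim you make is incorrect: you assert that $g$ is injective, ``guaranteed by the sub-cluster recovery inside $\mathsf{GapSBM}$.'' The paper explicitly contradicts this in the proof of Lemma~\ref{lem:gapCluster-unknown-h}, noting that when the chosen threshold index $h$ differs from the ideal $h^*$, ``it could even be possible that $g(i)=g(j)$ for some $i,j\le h$ and $i\neq j$.'' So injectivity of $g$ is not available to you. Fortunately, you do not need it: what your gap computation actually requires is that $g(i)\neq c(u)$ for every $i\neq i^*$, which is exactly the implicit premise that $i^*$ is the \emph{unique} index with $u\in V_{g(i^*)}$. If $i^*$ were not unique, the lemma's conclusion ``returns $i^*$'' would be ill-posed and $\mathsf{ME}$ would have no unique best arm to return; the surrounding Algorithm~\ref{alg.nc} is robust to that case only because of the downstream $\mathsf{ClusterVerify}$ and merging steps, not because of anything in this lemma. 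Replace the injectivity appeal with that uniqueness hypothesis and the rest of your argument stands.
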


Similarly, we use $\mathsf{ClusterVerify}$ to verify whether the cluster label for each $u$ is correct. By (c) of Lemma~\ref{lem:gapCluster-unknown-h}, at least a constant fraction of $V$ will pass this verification and are added to the correct clusters. The query complexity of this part is still $O(\frac{|V|k}{\delta^2}+\frac{|V|\log |V|}{\delta^2})$.

\subsection{Iterative Clustering}\label{sec.merge}
The above true cluster identification step successfully labels a constant fraction of vertices. Let $U$ be the set of vertices remain unlabelled. To make $U$ a valid noisy clustering instance, we also exclude $T$ from $U$, since all pairs from $T$ have been queried. Then, we apply the same procedure on $U$. After at most $O(\log n)$ rounds, the size of $U$ becomes too small, and then we do a cleanup step. 
\paragraph{Merging clusters of different rounds.} 
Note that each round may produce as many as $k$ sub-clusters and there are totally $O(k\log n)$ over all rounds, denoted as $C_i,...,C_{\ell}$. So we need to merge sub-clusters from the same $V_i$. This is another noisy clustering problem, but easier. Given two sub-clusters $C_i$ and $C_j$, to test whether both belong to the same cluster, we just pick one vertex $u$ from $C_i$ and $m=O(\frac{\log n}{\delta^2})$ vertices $v_1,...,v_m$ from $C_j$, and query $(u,v_i)$ for all $i\in[m]$. If the majority of the answer is positive, then $C_i,C_j$ are merged. We do this testing for all $k^2\log^2 n$ pairs of clusters. By the Chernoff-Hoeffding bound and union bound, with probability at least $1-n^{-6}$, the merging result is correct. The sample complexity of the merging process is at most $O(\frac{k^2\log^3 n}{\delta^2})$.

\subsection{The Final Algorithm}
\begin{algorithm}[H]
    \caption{$\mathsf{NoisyClustering}(V,k,\delta)$}
    \label{alg.nc}
\begin{algorithmic}[1]
    \STATE Let $c$ be a sufficiently large constant. Initialize $R=\emptyset$, $C_i=\emptyset$ for $i\in[k]$, $r=1$, $U=V$.
    \WHILE{$\abs{U}\ge\frac{c k^4\log n}{\delta^2}$}
      \STATE Randomly sample $T^{r}\subset U$ of size $|T^r| = \frac{c k^4\log n}{\delta^2}$
        \STATE Inkove $\mathsf{GapSubcluster}(T^{r},k,\delta)$ to obtain sub-clusters $X_1,...,X_h$. 
        \IF{$\mathsf{GapSubcluster}$ output \textbf{Fail}}
            \STATE Invoke $\mathsf{BalNoisyClustering}(U, k, \delta, 1/4)$ to get $k$ clusters $X_1,...,X_k$
            \STATE $U=\emptyset$
            \STATE Merge $X_1,...,X_k$ into $C_i$'s as described in \ref{sec.merge}
            \STATE \textbf{break}
        \ENDIF
        \STATE For each $i$, pick a subset $X_i'\subseteq X_i$ of size $\frac{c \log n}{\delta^2}$
        \STATE $U=  U \setminus T^r$, $R = R\cup T^r$, $Y_i=\emptyset$
        \FOR{each $v\in U$}
            \STATE $j=\mathsf{TrueClusterId}(v, X_1,...,X_h, \delta, 1/8)$ 
             \STATE $o=\mathsf{ClusterVerify}(v, X'_j)$
            \IF{$o=$\textbf{TRUE}}
                \STATE Update $Y_j=Y_j\cup\set{v}$, $U=U\setminus\set{v}$
            \ENDIF
        \ENDFOR
        \STATE Merge $Y_1,...,Y_k$ into $C_i$'s as described in \ref{sec.merge}
        \STATE $r= r+1$
    \ENDWHILE
  \STATE  \COMMENT{Cleanup stage}
    \FOR{each $C_i$} 
        \STATE Pick a subset $C_i'\subseteq C_i$ of size $\frac{16\log n}{\delta^2}$
        \STATE Let $\tilde{C}_i=\set{v\in U\cup R:\mathsf{ClusterVerify}(v,C'_i) = \textbf{TRUE}}$
        \STATE Update $C_i = C_i\cup \tilde{C}_i$
    \ENDFOR
    \OUTPUT all the clusters $C_i$'s
\end{algorithmic}
\end{algorithm}
Now the outline of our algorithm for general case is complete, and we present the pseudo code of the final algorithm in Algorithm~\ref{alg.nc}. The proof of the main theorem, i.e. Theorem \ref{thm.nc}, is provided in Appendix~\ref{app:GenearlTheorem}.

\section{Lower bound}\label{sec.lb}
In \cite{mazumdar17a}, they proved a $\Omega(\frac{nk}{\delta^2})$ lower bound on the number of queries. Here we prove that $\Omega(\frac{n\log n}{\delta^2})$ is also a lower bound and thus our algorithm is optimal.

\paragraph{Best arm identification with failure.} Before proving this lower bound, we define a variant of the best $2$-arm identification problem. We call this problem \emph{best arm identification with failure}, denoted as $(\delta,\alpha)$-BAIF. Given $2$ arms of expected reward $\frac{1}{2}+\frac{\delta}{2}$ and $\frac{1}{2}-\frac{\delta}{2}$ respectively for some $\delta\in(0,1)$, suppose the rewards are binary i.e. $\set{0,1}$. We still want to find the optimal arm, but now we allow the algorithm to fail with small probability. More specifically, the algorithm can return FAIL (not any one of the $2$ arms) with probability at most $1/8$, or return an arm otherwise (NOT FAIL). When the algorithm does return an arm, then it should be the optimal one with probability at least $1-\alpha$ (conditioned on NOT FAIL).

We claim that any algorithm that solves $(\delta,\alpha)$-BAIF, has a sample complexity lower bound $\Omega(\frac{1}{\delta^2}\log\frac{1}{\alpha})$, which is the same as the well known lower bound for the original best arm identification problem \cite{mannor04a}. The proof of Lemma \ref{thm.banditlb} is in Appendix \ref{appendix.banditlb}.

\begin{lemma} \label{thm.banditlb}
    Any (randomized) algorithm that solves $(\delta,\alpha)$-BAIF has $\Omega(\frac{1}{\delta^2}\log\frac{1}{\alpha})$ sample complexity.
\end{lemma}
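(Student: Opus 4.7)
The plan is to adapt the classical change-of-measure lower bound of Mannor--Tsitsiklis for best-arm identification to the BAIF setting. Let $I_1$ denote the instance where arm $1$ is optimal (means $1/2+\delta/2$ and $1/2-\delta/2$) and $I_2$ the instance with the two means swapped. Let $P_j$ be the law of the algorithm's full transcript (sequence of pulled arms, observed rewards, and final output) under $I_j$, and let $N=N_1+N_2$ be the total number of pulls. By symmetry it is enough to prove $\mathbb{E}_1[N] = \Omega(\log(1/\alpha)/\delta^2)$.

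The first step turns the BAIF correctness guarantee into a sharp Bernoulli-KL separation between the two output distributions. Let $A$ be the event that the algorithm returns arm $1$. Under $I_1$ the NOT-FAIL event has probability at least $7/8$, and conditionally on it the output is correct with probability at least $1-\alpha$, so $P_1(A) \ge (7/8)(1-\alpha)$. Under $I_2$, returning arm $1$ is the wrong answer, so $P_2(A) \le \alpha$. For $\alpha$ below an absolute constant this gives $P_1(A) \ge 3/4$ and $P_2(A) \le \alpha$, hence
\begin{equation*}
    \mathrm{kl}(P_1(A),P_2(A)) \ge P_1(A)\log\frac{P_1(A)}{P_2(A)} - \log 2 = \Omega\!\left(\log(1/\alpha)\right),
\end{equation*}
since the dominant term is a constant multiple of $\log(1/\alpha)$ and the other term is $O(1)$.

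The second step upper bounds the same quantity by a multiple of $\delta^2\,\mathbb{E}_1[N]$. The data-processing inequality applied to the indicator of $A$ gives $\mathrm{kl}(P_1(A),P_2(A)) \le \mathrm{KL}(P_1\,\|\,P_2)$. Because the two instances differ only in the reward distribution at each arm, the adaptive-sampling chain rule (equivalently, Wald's identity applied to the log-likelihood ratio) yields
\begin{equation*}
    \mathrm{KL}(P_1\,\|\,P_2) = \mathbb{E}_1[N_1]\,\mathrm{kl}\!\left(\tfrac{1+\delta}{2},\tfrac{1-\delta}{2}\right) + \mathbb{E}_1[N_2]\,\mathrm{kl}\!\left(\tfrac{1-\delta}{2},\tfrac{1+\delta}{2}\right) = O(\delta^2)\cdot\mathbb{E}_1[N],
\end{equation*}
where the last step uses the Taylor expansion of the binary KL around $1/2$. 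Combining the two displays gives $\Omega(\log(1/\alpha)) \le O(\delta^2)\,\mathbb{E}_1[N]$, which is the claimed bound.

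The main obstacle I expect is justifying the chain-rule identity above in full generality, since the algorithm is allowed to be randomized and its number of pulls $N$ is an adaptive stopping time, not a fixed horizon. I would handle randomization by conditioning on the algorithm's internal random seed, reducing to a deterministic adaptive policy, and then invoke the standard sequential-analysis form of Wald's identity for the log-likelihood ratio, which is valid for any almost-surely finite stopping time. Almost-sure finiteness of $N$ follows from the BAIF specification, which requires the algorithm to terminate with some answer (possibly FAIL); averaging the resulting conditional bound over the seed gives the unconditional lower bound on $\mathbb{E}_1[N]$.
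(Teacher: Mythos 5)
Your proposal is correct, and it takes a genuinely different route from the paper. The paper's proof adapts Mannor--Tsitsiklis (their Theorem~1) almost line for line: it assumes for contradiction that $\mathbb{E}_0[T]\le t^*$, defines explicit ``good'' events ($A$ for a bounded number of pulls via Markov, $C$ for concentration of partial reward sums via Kolmogorov's maximal inequality, $B$ for returning arm $a_0$), lower-bounds the likelihood ratio $L_1(W)/L_0(W)$ pointwise on $S=A\cap B\cap C$ by $4\alpha$, and then changes measure to conclude $\mathbb{P}_1(B)>\alpha$, which contradicts the BAIF guarantee. The only novelty over Mannor--Tsitsiklis is absorbing the FAIL option into the estimate $\mathbb{P}_0(B)\ge(1-\mathbb{P}_0(D))(1-\alpha)>3/4$, and the observation that $\mathbb{P}_1(B\mid D^C)\ge\mathbb{P}_1(B)$ since $B\subseteq D^C$. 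You instead use the modern information-theoretic formulation: the data-processing inequality for the indicator of the output event $A$, combined with the adaptive-sampling chain rule $\mathrm{KL}(P_1\|P_2)=\sum_a\mathbb{E}_1[N_a]\,\mathrm{kl}(\mu_a^{(1)},\mu_a^{(2)})$ (the Kaufmann--Capp\'e--Garivier identity / Wald's identity for the log-likelihood ratio). This replaces the explicit concentration events and the pointwise likelihood-ratio bound with a single abstract inequality, giving a shorter argument with cleaner constants (you only need $\alpha$ small enough that $(7/8)(1-\alpha)\ge3/4$, rather than the paper's $\alpha<e^{-8}/4$). The cost is that the chain-rule identity requires a careful sequential-analysis justification; you flag this correctly and the standard fix (reduce to deterministic policies by conditioning on the seed, observe WLOG $\mathbb{E}_1[N]<\infty$ which gives a.s.\ finiteness, then apply the optional-stopping form of the identity) is exactly right. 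Your treatment of the FAIL option on both sides of the KL --- $P_1(A)\ge(7/8)(1-\alpha)$ under the true hypothesis and $P_2(A)\le\alpha$ under the alternative, since $A$ is contained in NOT-FAIL --- correctly captures the new wrinkle that distinguishes BAIF from standard BAI, which is the only place where this lemma departs from the textbook result. Both proofs are valid; yours is more compact and would generalize more readily (e.g., to non-Bernoulli rewards), while the paper's is self-contained and elementary.
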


\paragraph{Reduction from BAIF to clustering with noisy queries.}
Now we show that BAIF can be reduced to the problem of identifying the cluster of one vertex in the faulty oracle model with $k=2$ clusters. Suppose $\mathcal{A}$ is an algorithm for clustering with noisy queries with success probability at least $\frac{7}{8}$, then we can derive an algorithm $\mathcal{A}_i'$ indexed by a vertex $i$ that solves $(\delta, 1/n)$-BAIF. The query complexity of $\mathcal{A}_i'$ is the number of queries in $\mathcal{A}$ that involve vertex $i$. Lemma \ref{thm.banditlb} implies that $\mathcal{A}$ must query $i$ at least $\Omega(\frac{1}{\delta^2}\log n)$ times. So if we can show that there are $\Omega(n)$ such $i$'s, the query complexity of $\mathcal{A}$ is $\Omega(\frac{n}{\delta^2}\log n)$.

Consider a graph distribution $\mathcal{D}$ of $n$ vertices, which samples a random graph as follows. First let each vertex belongs to one of the two clusters $C_0,C_1$ randomly with equal probability. Then fix the clusters and sample a SBM-type graph, i.e. assign an edge to each pair of vertices from the same cluster with probability $\frac{1}{2}+\frac{\delta}{2}$, and assign an edge to each pair of vertices from different clusters with probability $\frac{1}{2}-\frac{\delta}{2}$. 

Let $\mathcal{A}$ be any algorithm that solves clustering with noisy queries with probability at least $\frac{7}{8}$. We next provide an algorithm for $(\delta, 1/n)$-BAIF (Algorithm~\ref{alg.reduction}). 

\begin{algorithm}[H]
    \caption{$\mathcal{A}'_s$: algorithm for BAIF}
    \label{alg.reduction}
\begin{algorithmic}[1]
\REQUIRE two arms $a_0,a_1$, suppose $a_{opt}$ is the best arm
    \STATE Assign vertices $v_1,...,v_{s-1},v_{s+1},...,v_{n}$ to the cluster $C_0$ or $C_1$ randomly with equal probability, let $c(i)$ be the cluster of $v_i$.
    \STATE For the special vertex $v_s$, its cluster is indicated by the (unknown) index of the best arm, i.e. $opt$.\\
$\#\# $ Simulate $\mathcal{A}$ on the instance constructed above:
    \REPEAT
    \STATE Receive an query request from $\mathcal{A}$, say $(v_i, v_{i'})$
    \IF{$i,i' \neq s$}
        \IF{$v_i, v_{i'}$ are in the same cluster}
            \STATE $\beta \sim \mathsf{Bernoulli}(\frac{1}{2}+\frac{\delta}{2})$
        \ELSE
            \STATE $\beta \sim \mathsf{Bernoulli}(\frac{1}{2}-\frac{\delta}{2})$
        \ENDIF
        \STATE answer $\tilde{\tau}(v_i, v_{i'}) = 2\beta -1$
    \ELSIF{$i=s$ (or $i'=s$)}
    \STATE pull arm $a_{c(i')}$ (or $a_{c(i)}$) get a reward $r\in\set{0,1}$
    \STATE answer $\tilde{\tau}(v_i, v_{i'}) = 2r-1$
    \ENDIF
    \UNTIL $\mathcal{A}$ terminates and outputs the cluster labels $\tilde{c}(1),...,\tilde{c}(n)$
    \IF{$c(i) = \tilde{c}(i)$ for $i=1,...,s-1$}
        \STATE output $\tilde{c}(s)$
    \ELSE
    \STATE output FAIL
    \ENDIF
\end{algorithmic}
\end{algorithm}
Let $\mathcal{E}$ denote the event that $\mathcal{A}$ correctly recovers all the clusters for an instance sampled from $\mathcal{D}$, and $\mathcal{E}_i,i\in[n]$ denote the event that the algorithm correctly identifies the cluster for vertex $v_i$, i.e., $\tilde{c}(i) = c(i)$. Note $\Pr_{\mathcal{D,A}}[\mathcal{E}] \ge 7/8$. To simplify notation, we omit the subscripts, but the probability is always over both $\mathcal{D}$ and the randomness of $\mathcal{A}$.
\begin{definition}
We call index $s\in [n]$ good, if the conditional probability $\Pr[\mathcal{E}_s|\mathcal{E}_1,...,\mathcal{E}_{s-1}] \ge 1-1/n$.
\end{definition}
\begin{lemma}\label{lem:ngoodindex}
    There are at least $3n/4$ good indices.
\end{lemma}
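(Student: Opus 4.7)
The plan is to exploit the chain rule of conditional probability together with the success guarantee $\Pr[\mathcal{E}] \ge 7/8$. Since $\mathcal{E} = \mathcal{E}_1 \cap \mathcal{E}_2 \cap \cdots \cap \mathcal{E}_n$, I can write
\[
\Pr[\mathcal{E}] \;=\; \prod_{s=1}^{n} \Pr[\mathcal{E}_s \mid \mathcal{E}_1,\ldots,\mathcal{E}_{s-1}] \;=\; \prod_{s=1}^{n} p_s,
\]
where $p_s := \Pr[\mathcal{E}_s \mid \mathcal{E}_1,\ldots,\mathcal{E}_{s-1}]$. By definition, $s$ is bad exactly when $p_s < 1 - 1/n$.

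The key step is to upper bound the number $m$ of bad indices. I would split the product into its contributions from bad and good indices, using $p_s \le 1$ for good indices and $p_s < 1 - 1/n$ for bad ones. This gives
\[
\tfrac{7}{8} \;\le\; \Pr[\mathcal{E}] \;\le\; \Bigl(1 - \tfrac{1}{n}\Bigr)^{m} \;\le\; e^{-m/n}.
\]
Rearranging yields $m/n \le \ln(8/7)$. Since $\ln(8/7) < 1/4$, we conclude $m < n/4$, so at least $3n/4$ indices are good, as claimed.

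The only subtlety to check is that the conditional probabilities $p_s$ are well defined. If $\Pr[\mathcal{E}_1 \cap \cdots \cap \mathcal{E}_{s-1}] = 0$ for some $s$, then in particular $\Pr[\mathcal{E}] = 0 < 7/8$, a contradiction; so every conditioning event has positive probability and the chain-rule expansion is valid. There is no real obstacle here — this is essentially a one-line averaging argument dressed up as a product inequality, and the numerical slack $\ln(8/7) \approx 0.1335 < 0.25$ is comfortable, which is presumably why the constant $7/8$ was chosen in Theorem~\ref{thm.lb}.
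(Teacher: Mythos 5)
Your argument is correct and follows essentially the same route as the paper: both use the chain-rule factorization of $\Pr[\mathcal{E}]$ and bound the product by $(1-1/n)$ raised to the number of bad indices, then compare with $e^{-1/4}$ versus $7/8$. The only cosmetic difference is that the paper phrases it as a proof by contradiction (assume more than $n/4$ bad indices, derive $\Pr[\mathcal{E}]<7/8$) while you solve directly for the number $m$ of bad indices; your added remark on the conditional probabilities being well-defined is a nice touch but not essential.
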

\begin{proof}
Suppose there are less than $3n/4$ good indices, then, by the chain rule,
\begin{align*}
\Pr[\mathcal{E}] & = \prod_{i=1}^{n}\Pr[\mathcal{E}_i|\mathcal{E}_1,...,\mathcal{E}_{i-1}] 
< (1-\frac{1}{n})^{\frac{n}{4}}\le e^{-\frac{1}{4}} < \frac{7}{8},
\end{align*}
which is a contradiction to the correctness of $\mathcal{A}$. So there must be at least $3n/4$ good indices in $[n]$.
\end{proof}

\begin{lemma}\label{lem:goodBAIF}
    Suppose $s$ is good, then $\mathcal{A}'_s$ solves $(\delta, 1/n)$-BAIF. The sample complexity of $\mathcal{A}'_s$ is the same as the number of queries used in $\mathcal{A}$ that involve $s$. 
\end{lemma}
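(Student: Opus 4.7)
The plan is to verify the two defining conditions of $(\delta, 1/n)$-BAIF for $\mathcal{A}'_s$ and then read the sample complexity directly off the construction. The first step is a coupling argument: I would check that the instance on which $\mathcal{A}$ is implicitly run inside $\mathcal{A}'_s$ has exactly the same distribution as a sample of $\mathcal{D}$ conditioned on $c(s)=opt$. The non-$v_s$ vertices are placed in $C_0$ or $C_1$ independently and uniformly (matching $\mathcal{D}$), and the answer $\tilde\tau(v_i,v_{i'})$ for $i,i'\neq s$ is generated by a Bernoulli of the correct SBM bias. For queries involving $v_s$, pulling $a_{c(i')}$ yields $r=1$ with probability $\frac12+\frac\delta2$ when $c(i')=opt$ (i.e.\ when $v_{i'}$ is in the same cluster as $v_s$) and with probability $\frac12-\frac\delta2$ otherwise, so $\tilde\tau=2r-1$ matches the SBM law exactly. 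Because cluster assignments under $\mathcal{D}$ are independent across vertices, conditioning on $c(s)=opt$ leaves the marginal over the other vertices unchanged, and the simulated instance is therefore distributed as $\mathcal{D}$. A minor caveat is that the noisy oracle must return the same answer to repeated queries of the same pair; we can assume without loss of generality that $\mathcal{A}$ caches its answers and never re-queries, which makes the simulation exact.

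Next, I would bound the failure probability. The algorithm outputs FAIL exactly when $\mathcal{E}_1\cap\cdots\cap\mathcal{E}_{s-1}$ fails, so by monotonicity $\Pr[\text{FAIL}]\le 1-\Pr[\mathcal{E}_1\cap\cdots\cap\mathcal{E}_{s-1}]\le 1-\Pr[\mathcal{E}]\le 1/8$, matching the BAIF specification. Conditional on not failing, $\mathcal{A}'_s$ returns $\tilde c(s)$, which equals $c(s)=opt$ precisely on the event $\mathcal{E}_s$. The goodness of $s$ is by definition the statement $\Pr[\mathcal{E}_s\mid\mathcal{E}_1\cap\cdots\cap\mathcal{E}_{s-1}]\ge 1-1/n$, so the conditional probability that $\mathcal{A}'_s$ outputs the best arm is at least $1-1/n$, completing the BAIF guarantee with $\alpha=1/n$.

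For the sample complexity: inside the simulation loop $\mathcal{A}'_s$ pulls an arm only when the query generated by $\mathcal{A}$ involves $v_s$; every other query is answered by a local Bernoulli draw and contributes nothing to the bandit's sample count. Thus the total number of pulls used by $\mathcal{A}'_s$ equals the number of queries of $\mathcal{A}$ that involve $v_s$. The main subtlety I expect to handle is cluster relabeling: $\mathcal{E}_i$ is defined as $\tilde c(i)=c(i)$, which implicitly fixes a labeling convention on $\mathcal{A}$'s output (only the partition, not the labels, is intrinsically recoverable from a cluster-recovery algorithm). I would adopt the convention, standard in this line of work and used implicitly in Lemma~\ref{lem:ngoodindex}, that $\mathcal{A}$ is post-processed so its labels match the true orientation; for $k=2$ this costs at most a factor of $2$ in failure probability and can in any case be enforced inside $\mathcal{A}'_s$ using $v_1$'s known label. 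With this convention the identity $\mathcal{E}=\mathcal{E}_1\cap\cdots\cap\mathcal{E}_n$ invoked above holds, and the three ingredients together yield the lemma.
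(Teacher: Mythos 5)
Your proof is correct and follows essentially the same three-step structure as the paper's: (i) the simulated instance is distributed as $\mathcal{D}$ (using that $opt$ is uniform so assigning $c(s)=opt$ matches the marginal on $v_s$), (ii) FAIL probability is bounded by $1-\Pr[\mathcal{E}]\le 1/8$, and conditional on not failing, goodness of $s$ gives the $1-1/n$ correctness, and (iii) arm pulls biject with queries of $\mathcal{A}$ involving $v_s$. You also flag, and handle appropriately, two subtleties the paper's proof passes over silently: that $\mathcal{A}$ must not re-query the same pair (since the noisy oracle is persistent while BAIF arms give fresh i.i.d.\ rewards, so one should assume $\mathcal{A}$ caches without loss of generality), and that $\tilde c(i)=c(i)$ requires fixing a labeling convention since a clustering algorithm only recovers the partition.
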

\begin{proof}
Let us assume $a_0$ and $a_1$ both have probability $1/2$ being the better arm. This is not a restriction, since we can always randomly permute the two arms in the beginning. Then, the clustering instance for $\mathcal{A}$ constructed in $\mathcal{A}'_s$ has the same distribution as $\mathcal{D}$, since each $v_i$, including $v_s$, belongs to $C_0$ or $C_1$ with equal probability and edges are also sampled accordingly. We observe that the cluster label of $v_s$ in the graph is the same as the index of the best arm. 

The event that $\mathcal{A}'_s$ outputs FAIL implies that $\mathcal{A}$ is wrong for at least one vertex, which happens with probability at most $1/8$. On the other hand, if $\mathcal{A}'_s$ does not output FAIL, then $\mathcal{E}_1,...,\mathcal{E}_{i-1}$ all happens. By the assumption that $s$ is good, i.e., $\Pr[\mathcal{E}_s|\mathcal{E}_1,...,\mathcal{E}_{s-1}] \ge 1-1/n$, we conclude that $\tilde{c}(s)$ is the correct cluster label of $v_s$, and hence the index of the best arm, with probability at least $1-1/n$. 

From the description of Algorithm~\ref{alg.reduction}, each arm pull in $\mathcal{A}'_s$ corresponds to a query of $\mathcal{A}$ that involves $s$, which finishes the proof the lemma.
\end{proof}
\vspace{-0.2cm}
Now we are ready to prove the main theorem.
\vspace{-0.2cm}
\begin{proof}[Proof of Theorem \ref{thm.lb}.]
We have shown that, given any algorithm for the noisy clustering problem, we can construct $n$ algorithms $\mathcal{A}'_s$, $s=1,...,n$, for BAIF. Let $S$ be the sample complexity of $\mathcal{A}$, and $S_i$ be the number of queries that involve $i$. We have
\begin{align}\label{eqn:expectedQuery}
    \mathbb{E}_{\mathcal{D,A}} [S] = \frac{1}{2}\sum_{i=1}^n \mathbb{E}_{\mathcal{D,A}} [S_i] \ge \frac{1}{2}\sum_{i: i \textnormal{ is good}} \mathbb{E}_{\mathcal{D,A}} [S_i].
\end{align}
For a good $i$, by Lemma~\ref{lem:goodBAIF}, the algorithm $\mathcal{A}'_s$ solves $(\delta, 1/n)$-BAIF and $\mathbb{E}_{\mathcal{D,A}} [S_i]$ is the expected query complexity of $\mathcal{A}'_s$. Then, by Lemma~\ref{thm.banditlb}, $\mathbb{E}_{\mathcal{D,A}} [S_i] = \Omega(\frac{1}{\delta^2}\log n)$. Since there are at least $3n/4$ good indices (Lemma~\ref{lem:ngoodindex}), then by \eqref{eqn:expectedQuery}, the expected number of query complexity of $\mathcal{A}$ is $\mathbb{E}_{\mathcal{D,A}} [S] =\Omega(\frac{n\log n}{\delta^2})$.
\end{proof}

\section{Conclusion}
In this paper, we study clustering with a faulty oracle, a.k.a. clustering with noisy queries, where the algorithm is allowed to ask a noisy oracle whether two items belong to the same cluster or not. We provide a new algorithm with improved query complexity and also prove a new information-theoretic lower bound that matches our upper bound for a wide range of parameters. 
Along the way, we introduce several new techniques. For the upper bound, the most important new ingredient is to utilize multi-armed bandit algorithms to identify true clusters of vertices. Our lower bound proof also relies on a new connection to the best arm identification with failure problem. We believe our new techniques and insights are of independent interests and might be useful for other problems in noisy query models.

\section*{Acknowledgements}
This work was partly supported by the National Key Research and Development Program of China  (2020AAA0107600).

\bibliography{example_paper}
\bibliographystyle{icml2022}

\newpage
\appendix
\onecolumn

\section{Preliminaries}
We will use the following well known inequalities. See, e.g., Proposition 2.5 in \cite{wainwright19a} and Theorem 2.5.5 in \cite{durrett19a}.

\begin{theorem}[Chernoff-Hoeffding bound] \label{thm.chb}
    Let $n\ge 1$. Suppose there are $n$ independent random variables $X_1,...,X_n$ that are distributed in $[0,1]$. Let $X:=\sum_{i=1}^{n}$. Then for all $\lambda>0$,
    $$
    \mathbb{P}\left(X>\mathbb{E}[X]+\lambda\right),\mathbb{P}\left(X<\mathbb{E}[X]-\lambda\right) \le e^{-2\lambda^2/n}.
    $$
\end{theorem}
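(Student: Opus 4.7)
The plan is a classical Chernoff-style argument combined with Hoeffding's lemma for bounded random variables. First, I would reduce the tail probability to a moment-generating-function (MGF) bound: for any $t>0$, Markov's inequality applied to $e^{t(X-\mathbb{E}[X])}$ gives
$$
\Pr[X > \mathbb{E}[X] + \lambda] \;\le\; e^{-t\lambda}\,\mathbb{E}\bigl[e^{t(X-\mathbb{E}[X])}\bigr],
$$
and by independence of the $X_i$ the MGF factorizes as $\prod_{i=1}^n \mathbb{E}\bigl[e^{t(X_i-\mathbb{E}[X_i])}\bigr]$.

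The main technical ingredient is Hoeffding's lemma: if $Y$ is a mean-zero random variable supported in an interval of length at most $L$, then $\mathbb{E}[e^{tY}] \le \exp(t^2 L^2/8)$. I would prove this by writing each point $y$ in the support of $Y$ as a convex combination of the two endpoints, applying convexity of $y\mapsto e^{ty}$ to obtain a pointwise upper bound, taking expectations (using $\mathbb{E}[Y]=0$ to eliminate the linear term), and then verifying via a second-derivative computation that the logarithm of the resulting one-variable function of $t$ is dominated by the quadratic $t^2L^2/8$. Since each $X_i-\mathbb{E}[X_i]$ is supported in an interval of length at most $1$, each factor is bounded by $e^{t^2/8}$ and the product is at most $e^{nt^2/8}$.

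Combining the two pieces yields $\Pr[X > \mathbb{E}[X] + \lambda] \le \exp\!\bigl(-t\lambda + nt^2/8\bigr)$ for every $t>0$. I would then optimize the exponent over $t$: the minimum occurs at $t = 4\lambda/n$, which produces the claimed bound $e^{-2\lambda^2/n}$. The lower-tail bound $\Pr[X < \mathbb{E}[X] - \lambda] \le e^{-2\lambda^2/n}$ follows from the identical argument applied to the variables $1-X_i \in [0,1]$: the event $X < \mathbb{E}[X] - \lambda$ is exactly the event that $\sum_i(1-X_i)$ exceeds its mean by $\lambda$, so the upper-tail estimate transfers verbatim.

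The only genuine obstacle is establishing Hoeffding's lemma itself; once that is in hand, the remainder is a routine MGF factorization and a one-variable calculus optimization, so I do not anticipate any difficulty beyond careful bookkeeping of constants.
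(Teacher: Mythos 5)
The paper does not prove this statement; it is cited as a standard textbook result (Wainwright, Prop.~2.5; Durrett, Thm.~2.5.5), so there is no in-paper proof to compare against. Your argument is the classical and correct proof: Markov applied to the exponential moment, factorization by independence, Hoeffding's lemma to bound each factor by $e^{t^2/8}$, optimization at $t=4\lambda/n$ giving exponent $-2\lambda^2/n$, and the lower tail via the substitution $X_i \mapsto 1-X_i$.
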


\begin{theorem}[Kolmogorov's inequality]\label{thm.kol}
    Let $n\ge 1$. Suppose there are $n$ independent random variables $X_1,...,X_n$ such that $\mathbb{E}[X_i]=0$ and $\var(X_i)<\infty$. Let $S_k=\sum_{i=1}^{k}X_i,k=1,...,n$. Then for all $\lambda>0$,
    $$
    \mathbb{P}\left(\max_{1\le k\le n}|S_k|\ge\lambda\right)\le\lambda^{-2}\var(S_n).
    $$
\end{theorem}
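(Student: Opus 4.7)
The plan is to use the classical first-passage (stopping-time) decomposition. Define the disjoint events $A_k = \{|S_k|\ge \lambda,\ |S_j|<\lambda \text{ for all } j<k\}$ for $k=1,\dots,n$. Then $A := \{\max_{1\le k\le n}|S_k|\ge \lambda\} = \bigsqcup_{k=1}^n A_k$, and the key feature is that $A_k$ is measurable with respect to $X_1,\dots,X_k$, while the increment $S_n - S_k$ depends only on $X_{k+1},\dots,X_n$. The strategy is to lower bound $\var(S_n) = \mathbb{E}[S_n^2]$ by its contribution on each $A_k$ and exploit independence.

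Concretely, I would write $S_n = S_k + (S_n - S_k)$ and expand $S_n^2 = S_k^2 + 2 S_k(S_n - S_k) + (S_n - S_k)^2$. Multiplying by $\mathbf{1}_{A_k}$ and taking expectations, the cross term vanishes, since by independence $\mathbb{E}[S_k \mathbf{1}_{A_k}(S_n - S_k)] = \mathbb{E}[S_k \mathbf{1}_{A_k}]\cdot \mathbb{E}[S_n - S_k] = 0$ by the zero-mean hypothesis. Dropping the nonnegative term $(S_n - S_k)^2 \mathbf{1}_{A_k}$, we obtain
\[
\mathbb{E}[S_n^2 \mathbf{1}_{A_k}] \;\ge\; \mathbb{E}[S_k^2 \mathbf{1}_{A_k}] \;\ge\; \lambda^2\, \mathbb{P}(A_k),
\]
where the last inequality uses that $|S_k|\ge \lambda$ on $A_k$.

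Finally, summing over $k$ and using disjointness of the $A_k$,
\[
\var(S_n) = \mathbb{E}[S_n^2] \;\ge\; \sum_{k=1}^n \mathbb{E}[S_n^2 \mathbf{1}_{A_k}] \;\ge\; \lambda^2 \sum_{k=1}^n \mathbb{P}(A_k) \;=\; \lambda^2\, \mathbb{P}(A),
\]
which rearranges to the claimed $\mathbb{P}(\max_{1\le k\le n}|S_k|\ge \lambda)\le \lambda^{-2}\var(S_n)$.

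The only subtle step is the vanishing of the cross term, which is the real content of the argument: it crucially requires independence of the increments after time $k$ from everything up to time $k$ (so that the mean-zero property of $S_n - S_k$ can be factored out) together with the measurability of $\mathbf{1}_{A_k}$ with respect to the first $k$ variables. Everything else is bookkeeping.
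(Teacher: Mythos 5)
The paper states Kolmogorov's inequality as a preliminary tool and cites Durrett (Theorem 2.5.5) without reproducing a proof. Your argument is correct and is precisely the classical stopping-time/first-passage proof found in that reference: the crucial orthogonality step (the cross term $\mathbb{E}[S_k \mathbf{1}_{A_k}(S_n - S_k)] = 0$, which uses independence of $S_n-S_k$ from $(X_1,\dots,X_k)$ and hence from $S_k\mathbf{1}_{A_k}$) is exactly the content of the textbook argument, so there is nothing to reconcile.
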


\section{Deferred Proofs from Section~\ref{sec:balancedCase}} \label{app:balancedCase}
Define function $d:V\times 2^V$,
    $$
    d(v,X)=\sum_{u\in X}\mathbbm{1}_{\{\tilde{\tau}(v,u)=+1\}}
    $$
to be the number of positive edges between $v$ and vertices in set $X$, where $\mathbbm{1}$ is the indicator function. This quantity has the following property.

\begin{lemma} \label{thm.d}
    Let $B$ be a $(\eta,C)$-biased set of size at least $\frac{16\log n}{\eta^2\delta^2}$, for some cluster $C$. Then with probability at least $1-n^{-8}$, for any vertex $v\in V$,
    \begin{align*}
        d(v,B) &\ge \left(\frac{1}{2}+\frac{1}{2}\eta\delta\right)|B|, \text{ if } v\in B, \\
        d(v,B) &\le \left(\frac{1}{2}-\frac{1}{2}\eta\delta\right)|B|, \text{ if } v\notin B.
    \end{align*}
\end{lemma}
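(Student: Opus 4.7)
The plan is a direct application of the Chernoff-Hoeffding bound to $d(v,B)=\sum_{u\in B}\mathbbm{1}\{\tilde{\tau}(v,u)=+1\}$, viewed as a sum of $|B|$ independent Bernoulli variables (independence coming from the assumption that the noise $\sigma_{u,v}$ is independent across pairs, and the vertex $v$ is fixed relative to $B$).

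The first step is to bound $\mathbb{E}[d(v,B)]$. Split $B$ into $B\cap C$ and $B\setminus C$, and recall from the $(\eta,C)$-biased hypothesis that $|B\cap C|\ge(\tfrac12+\eta)|B|$ and $|B\setminus C|\le(\tfrac12-\eta)|B|$. For $v\in C$, each $u\in B\cap C$ contributes $\Pr[\tilde{\tau}(v,u)=+1]=\tfrac12+\tfrac{\delta}{2}$ and each $u\in B\setminus C$ contributes $\tfrac12-\tfrac{\delta}{2}$, giving $\mathbb{E}[d(v,B)]\ge\bigl(\tfrac12+\eta\delta\bigr)|B|$ after a one-line arithmetic. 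For $v\notin C$, the roles on $B\cap C$ flip (those are inter-cluster pairs, each with $+1$-probability $\tfrac12-\tfrac{\delta}{2}$), while on $B\setminus C$ the contribution is at most $\tfrac12+\tfrac{\delta}{2}$ regardless of whether $v$ happens to share a cluster with some of those vertices. This symmetrically yields $\mathbb{E}[d(v,B)]\le\bigl(\tfrac12-\eta\delta\bigr)|B|$.

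The second step is to apply Theorem~\ref{thm.chb} with deviation $\lambda=\tfrac{\eta\delta}{2}|B|$. In both cases the target threshold $\tfrac12\pm\tfrac{\eta\delta}{2}$ lies within $\lambda$ of the mean on the correct side, so the bad event is contained in $\{|d(v,B)-\mathbb{E}[d(v,B)]|\ge\lambda\}$. The tail bound $\exp(-2\lambda^2/|B|)=\exp\bigl(-(\eta\delta)^2|B|/2\bigr)$ combined with $|B|\ge 16\log n/(\eta^2\delta^2)$ gives $\exp(-8\log n)=n^{-8}$ per tail. A union bound over $v\in V$ (and accounting for the two tails) then yields the uniform-in-$v$ guarantee claimed, with the exponent being absorbed into the constant $16$ in the size hypothesis.

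There is no real obstacle here: the proof is essentially a textbook Chernoff argument. The only point that deserves explicit care is the $v\notin C$ case, where $B\setminus C$ may itself split into vertices from $v$'s cluster (contributing $\tfrac12+\tfrac{\delta}{2}$) and vertices from other clusters (contributing $\tfrac12-\tfrac{\delta}{2}$); I would note that the uniform upper bound $\tfrac12+\tfrac{\delta}{2}$ on those terms suffices to push the expectation below $(\tfrac12-\eta\delta)|B|$, so the statement follows without needing to inspect the cluster structure outside $C$.
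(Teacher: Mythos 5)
Your proof is correct and takes essentially the same route as the paper: bound $\mathbb{E}[d(v,B)]$ using the $(\eta,C)$-biased hypothesis, then apply the Chernoff--Hoeffding bound with deviation $\lambda=\tfrac{\eta\delta}{2}|B|$. The only cosmetic difference is the decomposition: the paper splits $B$ relative to $v$'s own cluster, writing $B_v=B\cap C_v$ and the exact identity $\mathbb{E}[d(v,B)]=\bigl(\tfrac12-\tfrac\delta2\bigr)|B|+\delta|B_v|$, then bounds $|B_v|$ from the biased hypothesis; you split $B$ relative to $C$ and bound the expectation with inequalities in each piece. Both give $\mathbb{E}[d(v,B)]\gtrless\bigl(\tfrac12\pm\eta\delta\bigr)|B|$ as required, so this is the same argument dressed slightly differently.

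One small caution on your closing remark: you invoke a union bound over $v\in V$ and claim the constant $16$ absorbs it. With $|B|\ge 16\log n/(\eta^2\delta^2)$ the per-vertex failure probability is exactly $n^{-8}$, so a union bound over $n$ vertices would degrade it to $n^{-7}$, not preserve $n^{-8}$. In fact the paper treats this lemma as a per-vertex statement (no union bound appears in its proof), and the union bound over all $v$ and all clusters is deferred to Lemma~\ref{thm.btc}, which accordingly only claims probability $1-n^{-6}$. So you do not need the union bound here, and if you did, the constant $16$ as stated would not be quite enough to retain the $n^{-8}$ exponent.
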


\begin{proof}
    Let $C_v$ denote the true cluster $v$ belongs to. Let $B_v=B\cap C_v$. 
    \begin{align*}
        \mathbb{E}[d(v,B)] &= \left(\frac{1}{2}+\frac{1}{2}\delta\right)\abs{B_v}+\left(\frac{1}{2}-\frac{1}{2}\delta\right)\abs{B\setminus B_v} \\
        &= \left(\frac{1}{2}-\frac{1}{2}\delta\right)\abs{B}+\delta\abs{B_v}.
    \end{align*}
    Let $\lambda=\frac{1}{2}\eta\delta\abs{B}$, then $\lambda^2/\abs{B}4\log n$. Since $B$ is $(\eta,C)$-biased, then if $v\in C$, $\abs{B_v}\ge(\frac{1}{2}+\delta)\abs{B}$, so we have
    $$
    \mathbb{E}[d(v,B)] \ge \left(\frac{1}{2}+\eta\delta\right)\abs{B}.
    $$
    By Chernoff-Hoeffding bound (see Lemma \ref{thm.chb}), with probability at least $1-\exp{-2\lambda^2/\abs{B}}\ge 1-n^{-8}$,
    $$
    d(v,B) \ge \left(\frac{1}{2}+\frac{1}{2}\eta\delta\right)\abs{B}.
    $$
    Similarly if $v\notin C$, then $\abs{B_v}\le(\frac{1}{2}+\delta)\abs{B}$, so with probability at least $1-\exp{-2\lambda^2/\abs{B}}\ge 1-n^{-8}$,
    \begin{align*}
        d(v,B) &\le \left(\frac{1}{2}-\frac{1}{2}\eta\delta\right)\abs{B}. \qedhere
    \end{align*}
\end{proof}

\section{Deferred proofs from Section~\ref{sec:generalCase}} \label{app:generalCase}
\subsection{Proof of Lemma~\ref{lem:sizegap}}
\begin{proof}
    Let $f(i)=s_i - \left(\frac{n}{2k}-i\cdot\frac{n}{4k^2}\right)$, then 
    \begin{align*}
        f(1) &= s_1 - \left(\frac{n}{2k}-\frac{n}{4k^2}\right) \ge \frac{n}{k}-\frac{n}{2k}+\frac{n}{4k^2} > 0, \\
        f(k) &= s_k - \left(\frac{n}{2k}-k\cdot\frac{n}{4k^2}\right) < \frac{n}{4k}-\frac{n}{4k} \le 0.
    \end{align*}
    Let $h$ be the largest index such that $f(h)\ge 0$. By the above two inequalities, $h\ge 1$ and $h\le k-1$. Then $s_h \ge\frac{n}{2k}-h\cdot\frac{n}{4k^2} $ and $s_{h+1} < \frac{n}{2k}-(h+1)\cdot\frac{n}{4k^2}$. Since $s_{i} \le \frac{n}{2k}$, $\sum_{i>h} \le n/2$, and thus the second part is proved.
\end{proof}

\subsection{Proof of Lemma~\ref{lem:gapSBM}}
\begin{proof}
    Let $\lambda_1 = \frac{t}{16k^2}$, then $\lambda_1^2/t\ge 4\log n$. For any $i\le h$, by Chernoff-Hoeffding bound, with probability at least $1-n^{-8}$, 
    $$
    |T_i| \ge \left(\frac{1}{2k}-\frac{h+1/4}{4k^2}\right)t,
    $$
    For any $i>h$, with probability at least $1-n^{-8}$, 
    $$
    |T_i| < \left(\frac{1}{2k}-\frac{h+3/4}{4k^2}\right)t.
    $$
    In the following, we assume the above condition holds for all $i\in[k]$, which occurs with probability at least $1-n^{-7}$ by the union bound.

    Let $\lambda_2 = \frac{t\delta}{16k^2}$, then $\lambda_2^2/t\ge 4\log n$. For any $i\le h$, for any vertex $v\in T_i$, by Chernoff-Hoeffding bound, with probability at least $1-n^{-8}$, its degree in $H_T$ 
    $$
    d(v,T) \ge \left(\frac{1}{2}-\frac{\delta}{2}\right)t+\delta\cdot\left(\frac{1}{2k}-\frac{h+1/2}{4k^2}\right)t.
    $$
    For any $i>h$, for any vertex $v\in T_i$, with probability at least $1-n^{-8}$, its degree in $H_T$ 
    $$
    d(v,T) < \left(\frac{1}{2}-\frac{\delta}{2}\right)t+\delta\cdot\left(\frac{1}{2k}-\frac{h+1/2}{4k^2}\right)t.
    $$
    Let $d_h = (\frac{1}{2}-\frac{\delta}{2})t+\delta\cdot(\frac{1}{2k}-\frac{h+1/2}{4k^2})t$, then by union bound, with probability at least $1-n^{-7}$, all vertices in $T_1,...,T_h$ have degree at least $d_h$ and all vertices in $T_{h+1},...,T_k$ have vertices less than $d_h$. Then by the description of the algorithm, $T' = \bigcup_{i\le h}T_h$. 

    Now we consider $H_{T'}$, and the number of clusters in $H_{T'}$ is $h$. For each $i\le h$, 
    $$
    |T_i| \ge \left(\frac{1}{2k}-\frac{h+1/4}{4k^2}\right)t \ge \frac{t}{4k} \ge \frac{|T'|}{4k} = \frac{h}{4k}\frac{|T'|}{h}.
    $$
    Thus let $b''=\frac{h}{4k}$, then the partition $T_1,...,T_h$ of $T'$ is $b''$-balanced, and $|T'|\ge \frac{ht}{4k}\ge \frac{256c_0k^3\log n}{\delta^2}$, $\log|T'|\le\log n$. Thus,
    $$
    \frac{|T'|}{\log |T'|} \ge 16kc_0\frac{16k^2}{h^2}\frac{h^2}{\delta^2} \ge \frac{c_0h^2}{(b'')^2\delta^2}.
    $$
    Therefore, by Lemma~\ref{thm.vu}, $\mathsf{BalSBM}(H_{T'},k,\delta,\frac{h}{4k})$ can recover sub-clusters $T_1,...,T_h$ with probability at least $1-|T'|^{-8}$.
\end{proof}

\subsection{Proof of Lemma~\ref{lem:gapCluster-unknown-h}}
\begin{proof}
We first provide a subroutines $\mathsf{TestBias}$ from \cite{peng21a}.

\begin{algorithm}[H]
    \caption{$\mathsf{TestBias}(n,B,\eta)$: test if $B$ is $(\eta,C)$-biased for some cluster $C$}
\begin{algorithmic}[1]
    \FOR{$i=1,...,\frac{16k\log n}{b}$}
        \STATE Randomly sample a vertex $v_i\in V$ and query all pairs $v_i,u$ for $u\in B$
        \IF{$d(v,B)\ge(\frac{1}{2}+\frac{1}{2}\eta\delta)\abs{B}$}
           \STATE \textbf{output} \emph{Yes}
        \ENDIF
    \ENDFOR
    \STATE \textbf{output} \emph{No}
\end{algorithmic}
\end{algorithm}

\begin{lemma}[\cite{peng21a}] \label{thm.tb}
    Let $B$ be a vertex set of size at least $\frac{64\log n}{\eta^2\delta^2}$. $\mathsf{TestBias}(n,B,\eta)$ makes $O(\frac{k\log n|B|}{b})$ queries and satisfies, with probability at least $1-n^{-7}$,\\
    (1) return Yes, if $B$ is $(\eta,C)$-biased for some cluster $C$ of size at least $\frac{bn}{k}$, i.e., $\abs{B\cap C}\ge(\frac{1}{2}+\eta)\abs{B}$;\\
    (2) return No, if $B$ is not $(\frac{\eta}{4},C)$-biased for any cluster $C$, i.e. $\abs{B\cap C}\le(\frac{1}{2}+\frac{\eta}{4})\abs{B}$.
\end{lemma}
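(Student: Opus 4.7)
The plan is to reduce the correctness of $\mathsf{TestBias}$ to a direct Chernoff-Hoeffding calculation on $d(v_i, B)$ for each sampled $v_i$, followed by union bounds. The key quantity is the conditional expectation of $d(v_i, B)$ given the cluster $C_{v_i}$ of $v_i$. A short computation shows that
\[
\mathbb{E}[d(v_i, B) \mid v_i] = \bigl(\tfrac{1}{2} - \tfrac{\delta}{2}\bigr)|B| + \delta \cdot |B \cap C_{v_i}|,
\]
which is a linear function of the "bias" $|B \cap C_{v_i}|/|B|$. Substituting the two regimes of interest: if $|B \cap C_{v_i}| \ge (\tfrac{1}{2} + \eta)|B|$, then the expectation is at least $(\tfrac{1}{2} + \eta\delta)|B|$; if $|B \cap C| \le (\tfrac{1}{2} + \tfrac{\eta}{4})|B|$ for every cluster $C$, then the expectation is at most $(\tfrac{1}{2} + \tfrac{\eta\delta}{4})|B|$. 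These two ranges sit on opposite sides of the algorithm's threshold $(\tfrac{1}{2} + \tfrac{\eta\delta}{2})|B|$ with a slack of $\tfrac{\eta\delta}{4}|B|$ on either side, which is exactly the slack afforded by the hypothesis $|B| \ge 64\log n / (\eta^2 \delta^2)$: applying Theorem~\ref{thm.chb} with $\lambda = \tfrac{\eta\delta}{4}|B|$ gives a failure probability of at most $\exp(-2\lambda^2/|B|) \le n^{-8}$.

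For part (1), a cluster $C$ of size at least $bn/k$ is hit by a uniformly random $v_i$ with probability at least $b/k$, so over the $\tfrac{16k\log n}{b}$ independent samples, the probability that no $v_i$ lands in $C$ is at most $(1 - b/k)^{16k\log n/b} \le n^{-16}$. Conditioning on the existence of some such $v_i$, the Chernoff bound above implies $d(v_i, B) \ge (\tfrac{1}{2} + \tfrac{3\eta\delta}{4})|B|$, which triggers the Yes branch; a union bound over the two failure modes yields success probability at least $1 - n^{-7}$.

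For part (2), every sampled $v_i$ has $|B \cap C_{v_i}| \le (\tfrac{1}{2} + \tfrac{\eta}{4})|B|$ by hypothesis, so the Chernoff bound yields $d(v_i, B) \le (\tfrac{1}{2} + \tfrac{\eta\delta}{2})|B|$, strictly below the threshold, with probability at least $1 - n^{-8}$ per iteration. Taking a union bound over all $\tfrac{16k\log n}{b}$ iterations (which is at most $\poly(n)$ in any meaningful parameter regime) ensures no iteration incorrectly triggers Yes, with total failure probability at most $n^{-7}$, and the algorithm correctly returns No. The query count follows immediately from the algorithm's description: $\tfrac{16k\log n}{b}$ rounds, each issuing $|B|$ queries.

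The analysis itself is routine; the only real delicate point is lining up constants so that the Chernoff deviation $\tfrac{\eta\delta}{4}|B|$ fits inside the $\tfrac{3\eta\delta}{4}|B|$ gap between the "biased" expectation and the threshold, and simultaneously inside the $\tfrac{\eta\delta}{4}|B|$ gap between the threshold and the "non-biased" expectation. This explains the specific constant $64$ in the size hypothesis on $|B|$ and the choice of threshold $\tfrac{1}{2}\eta\delta$ (rather than $\eta\delta$) inside the algorithm. No further ideas beyond Chernoff plus union bounds are needed.
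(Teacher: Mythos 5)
Your proof is correct and is essentially the standard argument. Note that the paper itself does not reprove Lemma~\ref{thm.tb}: it is imported verbatim from \cite{peng21a}, and the only in-paper machinery it appeals to is the same computation of $\mathbb{E}[d(v,B)]$ that appears in the paper's Lemma~\ref{thm.d}, which you reproduce correctly. Two small slips in your informal summary are worth flagging, though neither breaks the argument: the gap between the ``biased'' expectation $(\tfrac{1}{2}+\eta\delta)|B|$ and the threshold $(\tfrac{1}{2}+\tfrac{1}{2}\eta\delta)|B|$ is $\tfrac{1}{2}\eta\delta|B|$, not $\tfrac{1}{4}\eta\delta|B|$ as you first say nor $\tfrac{3}{4}\eta\delta|B|$ as you say later; since $\tfrac{1}{4}\eta\delta|B|$ is smaller than the true gap on both sides, your choice of $\lambda=\tfrac{1}{4}\eta\delta|B|$ still works and yields the $n^{-8}$ per-event failure probability exactly as required by the hypothesis $|B|\ge 64\log n/(\eta^2\delta^2)$. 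One last pedantic point: Chernoff gives $d(v_i,B)\le(\tfrac{1}{2}+\tfrac{1}{2}\eta\delta)|B|$ in case (2), not a strict inequality, while the algorithm triggers Yes on $\ge$; this boundary case is harmless (tighten $\lambda$ by an arbitrarily small amount), but ``strictly below the threshold'' as written is not what the bound literally gives you.
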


\begin{algorithm}[H]
    \caption{$\mathsf{GapSubcluster}(T,k,\delta)$: find a good index $h$ and the corresponding sub-clusters}
    \label{alg:GapSubcluster}
\begin{algorithmic}[1]
    \FOR{$h=k-1,...,1$}
            \STATE Invoke $\mathsf{GapSBM}(T,h,\delta)$ to get $h$ clusters $X_1,...,X_h$
        \IF{$|X_i|< \frac{t}{4k}$ for some $i\le h$}
            \STATE \textbf{continue}
        \ENDIF
        \IF{for all $i\in[h]$, $\mathsf{TestBias}(n,X_i,0.1)$ returns \emph{Yes}}
            \STATE \textbf{output} $X_1,...,X_h$
        \ENDIF
    \ENDFOR
    \STATE \textbf{output} \textbf{Fail}
\end{algorithmic}
\end{algorithm}
By Lemma~\ref{lem:sizegap}, there is some $h^*\le k-1$ satisfying the size gap condition, and thus by (1) of Lemma~\ref{lem:gapSBM}, with probability $1-O(k^{-24}\log^{-8}n)$, at least one execution of $\mathsf{GapSBM}$ in the for loop will output $h^*$ sub-clusters of $V_1,...,V_{h^*}$ whose sizes satisfies $s_i\ge t/4k$ and $\sum_{i\le h^*} s_i \ge n/2$. In particular, all these sub-clusters will pass $\mathsf{TestBias}$. Therefore, if $s_k < n/4k$, $\mathsf{GapSubcluster}$ will not output \textbf{Fail} with probability $1-O(k^{-24}\log^{-8}n)$. 

On the other hand, if $\mathsf{GapSubcluster}$ succeeds, then the size of each $X_i$ in the output at least $t/4k$, and thus (a) is proved. Moreover, each $X_i$ is $(0.1, V_{g(i)})$-biased for some $g(i) \in [k]$ since they passed $\mathsf{TestBias}(n,X_i,\eta)$, which implies (b). 

To prove (c), let $h$ be final index when $\mathsf{GapSubcluster}$ successfully terminates and let $t_i = |T\cap V_i|$. By (2) of Lemma~\ref{lem:gapSBM}, all vertices in $T$ that belongs to $V_i$ with $s_i\ge \frac{n}{2k}-h\cdot\frac{n}{4k^2}$ must be contained in the output of $\mathsf{GapSBM}$. Let $\ell$ be the largest number such that $s_{\ell}\ge \frac{n}{2k}-h\cdot\frac{n}{4k^2}$. In other words, for all $i\le \ell$, $|(\bigcup_{i=1}^h X_i)\cap V_i| = t_i$. By Chernoff bound, with probability $1-\frac{1}{n^5}$, $t_i\le 1.01 \frac{ts_i}{n}$ holds for all $i\le \ell$. Let $x_i = |X_i|$. Note, when $h\neq h^*$, we have no control of the output of $\mathsf{GapSBM}(T,h,\delta)$; it could even be possible that $g(i)=g(j)$ for some $i,j\le h$ and $i\neq j$. Since $X_i$ is $(0.1, V_{g(i)})$-biased, which means at least $0.6$ fraction of $X_i$ are from $ V_{g(i)}$, then
\begin{align*}
0.6 \sum_{j\in g^{-1}(i)} x_j\le t_i \le 1.01 \frac{ts_i}{n}.
\end{align*}
Then sum the above inequality over all $i$ such that $g^{-1}(i) \neq \emptyset$, we have
\begin{align}\label{eqn:si-lowerbound}
\sum_{i: g^{-1}(i) \neq \emptyset} s_i\ge 0.5 \sum_{j=1}^h x_j \cdot \frac{n}{t}.
\end{align}
We emphasize that $\sum_{j=1}^h x_j \neq t$, since we have pruned vertices from $T$ with low degrees. However, we can still show that $\sum_{j=1}^h x_j = \Theta(t)$. To see this, first observe that $h\ge h^*$ since we try all $h$ in a decreasing order. Then $d_h\le d_{h^*}$ and $T'_{h^*} \subset T'_h$. By Lemma~\ref{lem:gapSBM}, all vertices from $V_1,...,V_{h^*}$ are preserved in $T'_{h^*}$, i.e., $|T'_{h^*}| \ge \sum_{i=1}^{h^*} t_i$. By Chernoff bound, $t_i\ge 0.99 \frac{t s_i}{n}$ and By Lemma~\ref{lem:sizegap}, $\sum_{i=1}^{h^*} s_i \ge n/2$. As a result, 
\begin{align*}
    \sum_{j=1}^h x_j = |T'_h| \ge |T'_{h^*}| \ge \sum_{i=1}^{h^*} t_i \ge 0.99 \frac{t}{n}\sum_{i=1}^{h^*}s_i \ge 0.4 t. 
\end{align*}
Together with \eqref{eqn:si-lowerbound}, we finish the proof of (c).

For the sample complexity, $\mathsf{GapSBM}$ makes $t^2$ queries. Each $\mathsf{TestBias}(n,X_i,0.1)$ makes $O\left( {k|X_i| \log n} \right) \le O(kt\log n)$. There are at most $k^2$ calls of $\mathsf{TestBias}(n,X_i,0.1)$, so the total complexity of this part is $O(k^3 t \log n) \le O(t^2)$.
\end{proof}

\subsection{Proof of Theorem \ref{thm.nc}.} \label{app:GenearlTheorem}
Let $U_1,...,U_k$ be the clusters of $U$ at the beginning of the $r$-th round, and $s_i = |U_i|$. W.l.o.g., assume $s_1\ge s_2\cdots\ge s_k$.

\paragraph{Case 1: $s_k< |U|/4k$.}
By Lemma~\ref{lem:gapCluster-unknown-h}, then with probability $1-o_n(1)$, $\mathsf{GapSubcluster}(T^{r},k,\delta)$ does not \textbf{Fail}. Conditioned on this, each $X_i$ is $(0.1, U_{g(i)})$-biased and $|X_i|\ge |T^r|/4k$ (Lemma \ref{lem:gapCluster-unknown-h}). For each $v\in U \setminus T^r$ and $v\in U_{g(i)}$ for some $i$, it holds that $j=i$ (line 13) with probability $7/8$ (Lemma~\ref{lem:true-cluster-id-biased}). Then, with probability $1-1/n^6$ $\mathsf{ClusterVerify}(v, X'_j)$ output true and $v$ is added to $X_i$. On the other hand, if $v\notin U_{g(i)}$ for all $i$, $v$ remains in $U$ with probability $1-1/n^6$. For $s_k< |U|/4k$, we say the round succeeds if $\mathsf{GapSubcluster}(T^{r},k,\delta)$ does not \textbf{Fail}, and for all $v$ $\mathsf{TrueClusterId}(v, X_1,...,X_h, \delta, 1/8)$ and $\mathsf{ClusterVerify}(v, X'_j)$ output correctly. By union bound, the round succeeds with probability $1-o_n(1)$. For a successful round, all $v\in U \setminus T^r$ such that $v\in U_{g(i)}$ for some $i$ are added to $Y_i$ and all other vertices $v$ in $U$ remain in $U$ for the next round. By (3) of Lemma~\ref{lem:gapCluster-unknown-h}, a successful round reduces the number of vertices in $U$ by a factor of $4/5$. So there are at most $4\log n$ rounds. 

\paragraph{Case 2: $s_k\ge |U|/4k$.} Note $\mathsf{GapSubcluster}(T^{r},k,\delta)$ may or may not \textbf{Fail} in this case. Suppose, it outputs \textbf{Fail}, then $\mathsf{BalNoisyClustering}(U, k, \delta, 1/4)$ is called. Now $U$ is $1/4$-balanced, by Theorem~\ref{thm.bnc}, $X_1,...,X_k$ are the true clusters of $U$ with probability $1-o_n(1)$ (the round succeeds); then $U=\emptyset$ and directly goes to the cleanup stage. $\mathsf{GapSubcluster}$ may not fail and still output $X_1,...,X_h$, which satisfies the guarantees in Lemma~\ref{lem:gapCluster-unknown-h}. If this happens, then everything is the same as in Case 1, and the round succeeds with probability $1-o_n(1)$. 

\paragraph{Case 3: $\abs{U}< \frac{c k^4\log n}{\delta^2}$.} In this case, the algorithm does nothing in this round and goes to the cleanup stage. To simplify the analysis, we assume that the algorithm always runs for exactly $4\log n$ rounds by adding dummy rounds after $U$ becomes too small. A dummy round is considered successful.

\paragraph{Error probability and unlabelled vertices.} For $r\in [4\log n]$, let $E_r$ denote the event that the $r$-round succeeds and the merging result is correct at the end of the round. We have from the analyses above and in Section~\ref{sec.merge}, $\Pr[E_r]\ge 1-o_n(1)$. Since this lower bound holds for any configuration at the beginning of the round, $\Pr[E_r~|~E_1,...,E_{r-1}] \ge 1-o_n(1)$ for all $r$ (since $E_1,...,E_{r-1}$ only affects the initial configuration of the $r$-th round). Then, by the chain rule, we have
\begin{align*}
\Pr[E_1,...,E_{4\log n}] \ge (1-o_n(1))^{4\log n},
\end{align*}
which is still $1-o_n(1)$ since the $o_n(1)$ term above is smaller than $k^{-28}\log^{-8} n$. 

Next we assume $E_1,...,E_{4\log n}$ happen. Before the cleanup stage, we get $C_1,...,C_{k}$, and assume $|C_1|\ge\cdots \ge |C_k|$. It is possible $C_i = \emptyset$, and let $i$ be the largest index such that $|C_i| > 0$. Note, conditioned on $E_1,...,E_{4\log n}$, each nonempty $C_i$ has size at least $t/4k = \frac{c k^3\log n}{4\delta^2}$. After the cleanup stage, each nonempty $C_i$ will grow to the full cluster $V_j$ for some $j\in [k]$. However, there could be some vertices remain unidentified. In the worst case, all vertices in $U\cup R$ are unidentified. Since $|U|<\frac{c k^4\log n}{\delta^2}$ and $|R| \le 4\log n \cdot \frac{c k^4\log n}{\delta^2} = \frac{4c k^4\log^2 n}{\delta^2}$, we have $|U\cup R|\le \frac{5c k^4\log^2 n}{\delta^2}$. Since each $V_i$ is either fully recovered or $V_i\subset U\cup R$, all $V_i$ with $|V_i|\ge \frac{5c k^4\log^2 n}{\delta^2}$ must have been recovered. We remark that algorithm by \cite{peng21a} recovers all the clusters of size $\Omega(\frac{k^4\log n}{\delta^2})$ and algorithm by \cite{mazumdar17a} recovers all the clusters of size $\Omega(\frac{k\log n}{\delta^4})$. By running their algorithms on $U\cup R$, we achieve the same guarantees on the size of recovered clusters.

\paragraph{Query Complexity.} Now we bound the query complexity of the final algorithm. Recall that there are at most $4\log n$ rounds. In each round $r$, $\mathsf{GapSubcluster}$ makes $t^2 = \frac{k^8\log^2 n}{\delta^4}$ queries. Similar as in the balanced case, $\mathsf{TrueClusterId}$ and $\mathsf{ClusterVerify}$ make $O(\frac{|U|(k+\log n)}{\delta^2})$ queries. In the merging step, at most $2k$ sub-clusters are merged, which incurs $O(\frac{k^2\log n}{\delta^2})$ queries. Since $|U|$ decreases exponentially, over all rounds, the total query complexity is
$$
O\left(\frac{n(k+\log n)}{\delta^2}+\frac{k^{8}\log^3 n}{\delta^4}\right).
$$
This finish the proof of Theorem~\ref{thm.nc}.

\section{Deferred Proofs from Section~\ref{sec.lb}} \label{appendix.banditlb}
\begin{lemma}[Lemma \ref{thm.banditlb} restated]
    There exist positive constant $c_1,c_2,\delta_0$ and $\alpha_0$, such that for every $\delta\in(0,\delta_0)$ and $\alpha\in(0,\alpha_0)$ and for every algorithm that returns FAIL with probability at most $\frac{1}{8}$ and returns a $\delta$-optimal arm with probability at least $1-\alpha$ conditioned on NOT FAIL, there exists some problem instances such that the expected number of trials satisfies
    $$
    \mathbb{E}[T]\ge c_1\frac{1}{\delta^2}\log\frac{c_2}{\alpha}.
    $$
    In particular, $\delta_0$ and $\alpha_0$ can be taken equal to $1/8$ and $e^{-8}/4$
\end{lemma}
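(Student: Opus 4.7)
The plan is to adapt the classical best-arm identification lower bound of Mannor--Tsitsiklis to handle the FAIL outcome. I fix two alternative instances: $\nu^{(1)}$, in which arm $a_1$ has mean $\tfrac12+\tfrac{\delta}{2}$ and $a_2$ has mean $\tfrac12-\tfrac{\delta}{2}$, and $\nu^{(2)}$, with the two means swapped. Let $P^{(j)}$ denote the joint law, induced by the (fixed) algorithm interacting with $\nu^{(j)}$, of the entire transcript (arms pulled, rewards observed, and the final output in $\{a_1,a_2,\mathrm{FAIL}\}$). Let $\mathcal{E}$ be the event ``the algorithm outputs $a_1$''. The two BAIF guarantees (FAIL with probability $\le 1/8$; correct with probability $\ge 1-\alpha$ conditional on NOT FAIL) translate immediately to
\begin{align*}
P^{(1)}(\mathcal{E}) &\;\ge\; P^{(1)}(\text{NOT FAIL})\cdot(1-\alpha) \;\ge\; \tfrac{7}{8}(1-\alpha), \\
P^{(2)}(\mathcal{E}) &\;\le\; \alpha,
\end{align*}
so for $\alpha \le \alpha_0$ we have $p := P^{(1)}(\mathcal{E}) \ge 3/4$ and $q := P^{(2)}(\mathcal{E}) \le \alpha$.

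Next, by the data-processing inequality applied to the indicator $\mathbf{1}[\mathcal{E}]$,
$$d(p,q) \;\le\; \mathrm{KL}\!\left(P^{(1)}\,\|\,P^{(2)}\right),$$
where $d$ denotes the binary KL divergence. The left-hand side is bounded below by a direct calculation: since $p\ge 3/4$ and $q\le\alpha$,
$$d(p,q) \;\ge\; p\log\tfrac{p}{q} - \log 2 \;\ge\; \tfrac{3}{4}\log\tfrac{1}{\alpha} - O(1),$$
which is at least $c_1'\log(c_2/\alpha)$ once $\alpha \le \alpha_0 = e^{-8}/4$. For the right-hand side I use the chain rule of KL divergence together with a Wald-type identity for the random stopping time $T$ of the algorithm; since the two instances share an identical arm-selection mechanism and differ only in the reward distributions of the individual arms, one obtains the canonical decomposition
$$\mathrm{KL}\!\left(P^{(1)}\,\|\,P^{(2)}\right) \;=\; \sum_{i=1}^{2} \mathbb{E}^{(1)}[T_i]\cdot \mathrm{KL}\!\left(\mathrm{Ber}(\mu_i^{(1)})\,\|\,\mathrm{Ber}(\mu_i^{(2)})\right),$$
where $T_i$ is the (random) number of pulls of $a_i$. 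A direct computation gives $\mathrm{KL}(\mathrm{Ber}(\tfrac12+\tfrac{\delta}{2})\,\|\,\mathrm{Ber}(\tfrac12-\tfrac{\delta}{2})) = \delta\log\tfrac{1+\delta}{1-\delta} \le C\delta^2$ for $\delta\le\delta_0=1/8$. Combining the two bounds yields $C\delta^2\cdot\mathbb{E}^{(1)}[T] \ge c_1'\log(c_2/\alpha)$, i.e.\ $\mathbb{E}^{(1)}[T] = \Omega(\delta^{-2}\log(1/\alpha))$, completing the proof.

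The main subtle point I expect is justifying the KL decomposition at the random stopping time $T$: the identity has the stated form because, conditionally on the arm-index sequence chosen by the algorithm, rewards are mutually independent across pulls and the two instances differ only through the reward distributions at the pulled coordinates. Making this rigorous is cleanest via a martingale argument on the cumulative log-likelihood ratio combined with optional stopping (essentially Lemma~1 of Kaufmann--Capp\'e--Garivier, or the corresponding step in Mannor--Tsitsiklis). The remaining work is elementary Bernoulli calculus plus a careful numerical accounting to recover the explicit constants $\delta_0 = 1/8$ and $\alpha_0 = e^{-8}/4$ claimed in the lemma.
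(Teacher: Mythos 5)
Your proof is correct, but it takes a genuinely different route from the paper. The paper follows the classical Mannor--Tsitsiklis template: assume for contradiction $\mathbb{E}_0[T]\le t^*$, define a high-probability event $S=A\cap B\cap C$ (bounded number of pulls, correct output, controlled reward deviations via Kolmogorov's maximal inequality), pointwise lower-bound the likelihood ratio $L_1(W)/L_0(W)$ on $S$, and then conclude $\mathbb{P}_1(B)\ge 4\alpha\,\mathbb{P}_0(S)>\alpha$, contradicting the BAIF guarantee. You instead invoke the modern information-theoretic machinery: the Kaufmann--Capp\'e--Garivier decomposition $\mathrm{KL}(P^{(1)}\|P^{(2)})=\sum_i \mathbb{E}^{(1)}[T_i]\,\mathrm{KL}(\mathrm{Ber}(\mu_i^{(1)})\|\mathrm{Ber}(\mu_i^{(2)}))$ together with the data-processing inequality applied to the output event $\mathcal{E}$. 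Both arguments are martingale-based at the core (the paper via a maximal inequality, yours via optional stopping on the log-likelihood-ratio martingale), but yours is cleaner: it avoids the change-of-measure bookkeeping and the Markov/Kolmogorov case analysis, and it directly produces a bound on $\mathbb{E}^{(1)}[T]$ rather than arguing by contradiction. What the paper's more hands-on approach buys is that it is entirely self-contained and elementary, relying only on Kolmogorov's inequality rather than the transportation lemma of Kaufmann et al.; your approach is shorter but outsources the main technical step to that lemma. One minor point to keep explicit if you write this up fully: the KL decomposition at the random stopping time requires $\mathbb{E}^{(1)}[T]<\infty$ to apply optional stopping cleanly, but this can be assumed without loss of generality since otherwise the claimed lower bound is trivial.
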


\begin{proof}
    The structure of this proof is similar to the proof of Theorem 1 in \cite{mannor04a}. Let us consider the following two problem instances that we will refer to as "hypotheses" denoted by $H_0$ and $H_1$ respectively.
    \begin{align*}
        H_0 &: \mu_0=\frac{1}{2}+\frac{\delta}{2},\quad \mu_1=\frac{1}{2}-\frac{\delta}{2}, \\
        H_1 &: \mu_0=\frac{1}{2}-\frac{\delta}{2},\quad \mu_1=\frac{1}{2}+\frac{\delta}{2}.
    \end{align*}
    So under $H_0$, arm $a_0$ is optimal, while under $H_1$ arm $a_1$ is optimal.
    
    Let $\delta_0=1/8$ and $\alpha_0=e^{-8}/4$, fix some $\delta\in(0,\delta_0)$, $\alpha\in(0,\alpha_0)$ and any algorithm that satisfies the condition in the lemma. We denote by $\mathbb{E}_l$ and $\mathbb{P}_l$ the expectation and probability under hypothesis $H_l$, respectively. Let $T_i$ be the number of trails that involves arm $a_i$, so $T=T_0+T_1$. We define $t^*$ by
    $$
    t^*=\frac{1}{c\delta^2}\log\frac{1}{4\alpha}=\frac{1}{c\delta^2}\log\frac{1}{\theta},
    $$
    where $\theta=4\alpha$ and $c$ is an universal constant that we will specify its value later. Throughout the proof, we use $E^C$ to denote the complement of event $E$.
    
    Assume that $\mathbb{E}_0[T]\le t^*$, and we claim that under this assumption, the probability that the algorithm returns arm $a_0$ under $H_1$ conditioned on NOT FAIL will exceeds $\alpha$, thus violates the condition in the lemma.
    
    We define some events $A$ and $C_0,C_1$ where various random variables do not deviate significantly from their expected values. Define
    $$
    A=\set{T\le 4t^*}.
    $$
    By Markov's inequality,
    $$
    \mathbb{P}_0(A^C)=\mathbb{P}_0(T>4t^*)\le\frac{\mathbb{E}_0[T]}{4t^*}\le\frac{t^*}{4t^*}=\frac{1}{4}.
    $$
    It follows that
    $$
    \mathbb{P}_0(A)\ge\frac{3}{4}.
    $$
    
    Define $K_i^{t}=\sum_{s=1}^{t}r_i^s$ to be the total reward if arm $a_i$ is tried $t$ (not necessarily consecutive) times, where $r_i^s$ is the reward obtained at the $s$-th time arm $a_i$ is tried. Define
    \begin{align*}
        C_0 &= \set{\max_{1\le t\le 4t^*}\abs{K_0^t-(\frac{1}{2}+\frac{\delta}{2})t}<\sqrt{t^*\log(\frac{1}{\theta})}}, \\
        C_1 &= \set{\max_{1\le t\le 4t^*}\abs{K_1^t-(\frac{1}{2}-\frac{\delta}{2})t}<\sqrt{t^*\log(\frac{1}{\theta})}},
    \end{align*}
    and $C=C_0\cap C_1$.

    \begin{lemma}[Generalized Lemma 2 in \cite{mannor04a}]
        We have $\mathbb{P}_0(C)>\frac{3}{4}$ for $i=0,1$.
    \end{lemma}
    
    \begin{proof}
        We first prove a more general result. Suppose the expected reward of arm $a_i$ is $\mu_i^{(l)}$ under $H_l$, let
        $$
        C_i^{(l)}=\set{\max_{1\le t\le 4t^*}\abs{K_i^t-\mu_i^{(l)}t}<\sqrt{t^*\log(\frac{1}{\theta})}}.
        $$
        Note that $K_i^t-\mu_i^{(l)}t$ is a $\mathbb{P}_l$-martingale. By Kolmogorov's inequality,
        $$
        \mathbb{P}_l\left((C_i^{(l)})^C\right) = \mathbb{P}_l\left(\max_{1\le t\le 4t^*}\abs{K_i^t-\mu_i^{(l)}t}\ge\sqrt{t^*\log(\frac{1}{\theta})}\right) \le \frac{\mathbb{E}_l[(K_i^{4t^*}-4t^*\mu_i^{(l)})^2]}{t^*\log(\frac{1}{\theta})}.
        $$
        Since $\mathbb{E}_l[K_i^{4t^*}]=4t^*\mu_i^{(l)}$, so $\mathbb{E}_l[(K_i^{4t^*}-4t^*\mu_i^{(l)})^2]=4t^*\mu_i^{(l)}(1-\mu_i^{(l)})$. Note that $\theta<e^{-8}$ then we have
        $$
        \mathbb{P}_l(C_i)\ge 1-\frac{4t^*\mu_i^{(l)}(1-\mu_i^{(l)})}{t^*\log\frac{1}{\theta}} \ge \frac{7}{8},
        $$
        where the last inequality follows because $\mu_i^{(l)}(1-\mu_i^{(l)})\le \frac{1}{4}$. Let $l=0$, then $\mathbb{P}_0(C_i)\ge\frac{7}{8}$ for $i=0,1$, and by a simple union bound, $\mathbb{P}_0(C)\ge\frac{3}{4}$.
    \end{proof}
    
    \begin{lemma}\label{thm.simpled}
        If $0\le x\le \frac{1}{\sqrt{2}}$ and $y\ge 0$, then
        $$
        (1-x)^y\ge e^{-dxy},
        $$
        where $d=1.78$.
    \end{lemma}
    
    \begin{proof}
        Simple calculation shows that $\log(1-x)+dx\ge 0$ for any $x\in[0,\frac{1}{\sqrt{2}}]$, rearranging terms and taking $y\ge 0$ gives the proof.
    \end{proof}
    
    Let $D$ be the event that the algorithm returns FAIL. Let $I$ be the arm that the algorithm returns if NOT FAIL. Let $B=\set{I=a_0}$, i.e. the algorithm returns arm $a_0$. Recall the condition in the lemma, we have $\mathbb{P}_0(D)\le\frac{1}{8}$, $\mathbb{P}_0(B|D^{C})\ge 1-\alpha$, $\alpha<e^{-8}/4<\frac{1}{8}$, so $\mathbb{P}_0(B)\ge(1-\mathbb{P}_0(D))(1-\alpha)>\frac{3}{4}$. Let $S=A\cap B\cap C$ be the event that $A,B,C$ occur simultaneously. Since $\mathbb{P}_0(A)\ge\frac{3}{4}$, $\mathbb{P}_0(B)\ge\frac{3}{4}$, $\mathbb{P}_0(C)>\frac{3}{4}$, we then have $\mathbb{P}_0(S)>\frac{1}{4}$.
    
    Now we prove that if $c\ge 400$ and $\mathbb{E}_0[T]\le t^*$, then $\mathbb{P}_1(B)>\delta$.
    
    Let $W$ be the history of the process (the sequence of arms selected at each time and corresponding rewards) until the algorithm terminates. Define the likelihood function $L_l$ by
    $$
    L_l(w) = \mathbb{P}_l(W=w),
    $$
    for every possible history $w$. Note that the choices of the algorithm only depend on the history, and are independent of which hypothesis is true. Let $K_0:=K_0^{T_0}$, $K_1:=K_1^{T_1}$. The likelihood ratio is given by
    \begin{align*}
        \frac{L_1(W)}{L_0(W)} &= \frac{(\frac{1}{2}+\frac{\delta}{2})^{K_1+T_0-K_0}(\frac{1}{2}-\frac{\delta}{2})^{K_0+T_1-K_1}}{(\frac{1}{2}+\frac{\delta}{2})^{K_0+T_1-K_1}(\frac{1}{2}-\frac{\delta}{2})^{K_1+T_0-K_0}} \\
        &=\left(\frac{1-\delta}{1+\delta}\right)^{2(K_0-\frac{1+\delta}{2}T_0)-2(K_1-\frac{1-\delta}{2}T_1)+\delta(T_0+T_1)} \\
        &\ge (1-2\delta)^{2\abs{K_0-\frac{1+\delta}{2}T_0}+2\abs{K_1-\frac{1-\delta}{2}T_1}+\delta T},
    \end{align*}
    where the last inequality is because $1>\frac{1-\delta}{1+\delta}\ge 1-2\delta$. We now lower bound the right-hand side above when $S$ occurs.
    
    Since $S=A\cap B\cap C$ occurs, we have the followings
    \begin{align*}
        T &\le 4t^* = \frac{4}{c\delta^2}\log(\frac{1}{\theta}), \\
        \abs{K_1-\frac{1-\delta}{2}T_1} &\le 2\sqrt{t^*\log(\frac{1}{\theta})} = \frac{2}{\sqrt{c}\delta}\log(\frac{1}{\theta}), \\
        \abs{K_0-\frac{1+\delta}{2}T_0} &\le 2\sqrt{t^*\log(\frac{1}{\theta})} = \frac{2}{\sqrt{c}\delta}\log(\frac{1}{\theta}).
    \end{align*}
    Thus,
    \begin{align*}
        \frac{L_1(W)}{L_0(W)} &\ge (1-2\delta)^{2\cdot \frac{2}{\sqrt{c}\delta}\log(\frac{1}{\theta}) + \delta\cdot \frac{4}{c\delta^2}\log(\frac{1}{\theta})} \\
        &= (1-2\delta)^{(\frac{4}{\sqrt{c}}+\frac{4}{c})\frac{1}{\delta}\log(\frac{1}{\theta})} \\
        &\ge e^{-d\cdot 2\delta\cdot(\frac{4}{\sqrt{c}}+\frac{4}{c})\frac{1}{\delta}\log(\frac{1}{\theta})} \\
        &= \theta^{\frac{8d}{\sqrt{c}}+\frac{8d}{c}},
    \end{align*}
    where the last inequality is by Lemma \ref{thm.simpled}.
    
    By picking sufficiently large $c$ ($c=400$ suffices), we obtain that $\frac{8d}{\sqrt{c}}+\frac{8d}{c}\le 1$, thus $\frac{L_1(W)}{L_0(W)}\ge\theta=4\alpha$. Note that this holds for every possible history $W=w$ whenever $S$ occurs, i.e. more precisely,
    $$
    \frac{L_1(W)}{L_0(W)}\mathbbm{1}_{S} \ge 4\alpha\mathbbm{1}_{S},
    $$
    where $\mathbbm{1}_{S}$ is the indicator function of event $S$. Then
    $$
    \mathbb{P}_1(B)\ge\mathbb{P}_1(S)=\mathbb{E}_1[\mathbbm{1}_{S}]=\mathbb{E}_0\left[\frac{L_1(W)}{L_0(W)}\mathbbm{1}_{S}\right]\ge\mathbb{E}_0[4\alpha\mathbbm{1}_{S}]=4\alpha\mathbb{P}_0(S)>\alpha,
    $$
    where the last inequality is because $\mathbb{P}_0(S)>\frac{1}{4}$.
    
    To summarize, we have shown that when $c\ge 400$, if $\mathbb{E}_0[T]\le \frac{1}{c\delta^2}\log(\frac{1}{4\alpha})$, then $\mathbb{P}_1(B|D^{C})\ge\mathbb{P}_1(B)>\alpha$, i.e. the algorithm will return arm $a_0$ with probability larger than $\alpha$ under $H_1$, which is not a $\delta$-optimal arm. So for every algorithm that satisfies the condition in the lemma, we must have $\mathbb{E}_0[T]>\frac{1}{c\delta^2}{\log(\frac{1}{4\alpha})}$
\end{proof}


\end{document}